\documentclass[sn-mathphys-num]{sn-jnl}
\usepackage{subfig}
\usepackage{graphicx,version,color}
\usepackage{bm}
\usepackage{multirow}
\usepackage{graphicx}%
\usepackage{multirow}%
\usepackage{amsmath,amssymb,amsfonts}%
\usepackage{amsthm}%
\usepackage{mathrsfs}%
\usepackage[title]{appendix}%
\usepackage{xcolor}%
\usepackage{textcomp}%
\usepackage{manyfoot}%
\usepackage{booktabs}%
\usepackage[ruled,norelsize,vlined,linesnumbered]{algorithm2e}
\makeatletter
\newcommand{\removelatexerror}{\let\@latex@error\@gobble}
\makeatother
\usepackage{algpseudocode}%
\usepackage{listings}%
\usepackage{makecell}
\usepackage{natbib}
\DeclareMathOperator{\diag}{diag}

\newcommand{\Bone}{{\bm 1}}

\newcommand{\tF}{{\rm F}}

\newcommand{\pxi}{\partial_{x_i}}

\newcommand{\LP}{\mathcal{L}_{\rm{P}}}

\newcommand{\LS}{\mathcal{L}_{\rm{S}}}

\newcommand{\JP}{\mathcal{J}_{\rm{P}}}

\newcommand{\JS}{\mathcal{J}_{\rm{S}}}

\newcommand{\Eltwo}{\mathcal{E}_{\ell^2}}
\newcommand{\hEltwo}{\mathcal{E}'_{\ell^2}}

\theoremstyle{thmstyleone}%
\newtheorem{theorem}{Theorem}
\theoremstyle{thmstyletwo}%
\theoremstyle{thmstylethree}%

\raggedbottom

\begin{document}

\title[Deep Learning Optimization]{Deep Learning Optimization Using Self-Adaptive Weighted Auxiliary Variables}

\author[1]{\fnm{Yaru} \sur{Liu}}\email{yaruliu@std.uestc.edu.cn}

\author[1]{\fnm{Yiqi} \sur{Gu}}\email{yiqigu@uestc.edu.cn}

\author*[2]{\fnm{Michael K.} \sur{Ng}}\email{michael-ng@hkbu.edu.hk}

\affil[1]{\orgdiv{School of Mathematical Sciences}, \orgname{University of Electronic Science and Technology of China}, \orgaddress{\city{Sichuan}, \postcode{611731}, \country{China}}}

\affil[2]{\orgdiv{Department of Mathematics}, \orgname{Hong Kong Baptist University}, \orgaddress{\street{Kowloon Tong}, \city{Hong Kong}, \country{China}}}

\abstract{In this paper, we develop a new optimization framework for the least squares learning problem via fully connected neural networks or physics-informed neural networks. The gradient descent sometimes behaves inefficiently in deep learning because of the high non-convexity of loss functions and the vanishing gradient issue. Our idea is to introduce auxiliary variables to separate the layers of the deep neural networks and reformulate the loss functions for ease of optimization. We design the self-adaptive weights to preserve the consistency between the reformulated loss and the original mean squared loss, which guarantees that optimizing the new loss helps optimize the original problem. Numerical experiments are presented to verify the consistency and show the effectiveness and robustness of our models over gradient descent.}

\keywords{deep learning, optimization, deep neural network, mean squared error, self-adaptive weight, auxiliary variable}

\pacs[MSC Classification]{68T07, 90C26}
\maketitle
\section{Introduction}\label{sec1}

In the past decade, deep learning has attracted significant interest from various mathematics and computer science research fields. For supervised learning, the least squares learning, i.e.,
\begin{equation}\label{00}
\min_\phi\frac{1}{N}\sum_{n=1}^N|\phi(x_n)-y_n|^2,
\end{equation}
is one of the most widely studied and applied models, where $\{(x_n,y_n)\}$ is a given dataset with $x_n$ representing the features and $y_n$ referring to the associated labels. The estimator $\phi$ is a deep neural network (DNN) to be trained. Since DNNs are highly nonlinear structures with their parameters, the mean squared error loss function in \eqref{00} might be highly non-convex with its variables and, therefore, difficult to solve.

Some recent work investigates the convergence behaviors of gradient descent optimizers in solving the optimization problem in \eqref{00}. Specifically, for $\phi$ being a fully-connected neural network (FNN), it was demonstrated that good minima can be found via gradient descent under over-parameterization hypothesis \cite{Du2019,Allen-Zhu2019,Zou2019,Du2019_2,Allen-Zhu2019_2,E2019,Zhou2021,Oymak2020}. Also, for the case that $\phi$ is a physics-informed neural network (PINN), which arises recently for problems involving partial differential equations (PDE) \cite{Raissi2019,Rao2020,Cai2021,Pang2019,Jagtap2021,Chiu2022}, researchers have
shown that gradient descent can find the global optima with over-parameterization \cite{Gao2023,Luo2020}.

However, over-parameterization involves unrealistically wide/deep neural networks that are usually impossible for practical implementation. And in practice, the high non-convexity of the loss function often impedes the gradient descent optimizer, which eventually converges to bad local optima. Moreover, training FNNs suffers from the vanishing gradient problem; namely, the gradient magnitudes are much smaller for the network parameters at the innermost layers, causing the Hessian of these parameters to be ill-conditioned and the gradient descent to be very inefficient, see \cite{Rognvaldsson1994,Erhan2009} for discussion.

To alleviate the preceding difficulties, Carreira-Perpinan et al. develop the model of auxiliary variables in \cite{Carreira-Perpinan2014}. Specifically, every layer of an FNN is represented by an intermediate variable so that the deep network architecture can be split into a sequence of shallow architectures. These new variables and equality constraints are added to the loss function by quadratic penalty terms, and the resulting optimization problem with auxiliary variables can be solved by standard optimizers. In \cite{Taylor2016,Wang2019}, an alternating direction method of multipliers (ADMM) is developed for this optimization, which avoids local minima by solving a sequence of sub-problems globally, leading to speedups in some cases. More applications of ADMM in deep learning can be referred to \cite{Gao2019,Yang2018,Sun2016,Song2023}. Overall, the method of auxiliary variables reduces the ill-conditioning caused by the deep network structure and partially decouples many variables, making efficient and distributed optimization possible.

However, despite having the same minimizers, the mean squared loss in \eqref{00} and the loss with auxiliary variables proposed in \cite{Carreira-Perpinan2014} are inconsistent; even if the latter becomes small, the former could remain large. Therefore, optimizing the loss with auxiliary variables may not decrease the mean squared error. Consequently, the model of auxiliary variables in \cite{Carreira-Perpinan2014} is less accurate or even completely ineffective for the original deep learning problem \eqref{00}. We observe in numerical experiments that the loss with auxiliary variables keeps falling using the alternating direction optimizer, but the learning error stays at a high level (see the numerical examples in Section \ref{sec_experiments}).

This work proposes a novel class of learning models using self-adaptive weighted auxiliary variables. Specifically, we develop self-adaptive weights to the quadratic penalty terms of the auxiliary variables, obtaining new weighted loss functions. We formulate these models for general FNNs and the PINNs associated with first-order linear PDEs. Moreover, we prove the consistency between the weighted loss and the original mean squared loss. The consistency constants are $O(L)$ for FNNs and $O(dL^2)$ for PINNs, where $L$ is the depth of the neural network and $d$ is the dimension of the problem. In addition, numerical experiments verify the proven consistencies. Compared with the conventional least squares model and the existing penalty model with auxiliary variables, the proposed model is more robust with random initialization and can obtain smaller learning errors. To our knowledge, these proposed models and their consistencies have not been studied in any previous work.

\subsection{Contributions}
The main contributions of this paper are listed as follows. 
\begin{itemize}
    \item We improve the FNN penalty model (PM-FNN) with auxiliary variables \cite{Carreira-Perpinan2014} by developing a self-adaptive weighted penalty model (SAPM-FNN). And we prove the consistency between SAPM-FNN and the conventional least squares FNN model (LS-FNN), which does not hold for PM-FNN.
    \item We propose the standard penalty model (PM-PINN) and the self-adaptive weighted penalty model (SAPM-PINN) for PINNs from first-order linear PDEs. We also prove the consistency between SAPM-PINN and the least squares PINN model (LS-PINN).
    \item The above consistencies are verified by numerical experiments, which also demonstrate that SAPM performs more robustly and accurately than PM and LS in both learning problems with FNNs and PDE problems with PINNs.
\end{itemize}
The models discussed in this paper are summarized in Table \ref{Tab_novelty}.
\begin{table}\small
\centering
\begin{tabular}{llc}
  \toprule
Models & Formulation & Comments \\\hline
LS-FNN & Eq. \eqref{LS_FNN}& conventional model \\
LS-PINN & Eq. \eqref{LS_PINN}& proposed in \cite{Raissi2019}\\
PM-FNN & Eq. \eqref{PM_FNN} & proposed in \cite{Carreira-Perpinan2014}  \\
PM-PINN & Eq. \eqref{PM_PINN} & proposed in this paper \\
SAPM-FNN & Eq. \eqref{SAPM_FNN} & proposed in this paper \\
SAPM-PINN & Eq. \eqref{SAPM_PINN} & proposed in this paper \\\bottomrule
\end{tabular}
\caption{\em Models discussed in this paper}\label{Tab_novelty} 
\end{table}

\subsection{Organization}
This paper is organized as follows. In Section \ref{sec_FNN}, we discuss the least squares learning problem with FNNs: we first review the penalty model with auxiliary variables and propose our self-adaptive weighted model. In Section \ref{sec_PINN}, we propose and develop the weighted model for the least squares problem with PINNs. Several numerical experiments are demonstrated in Section \ref{sec_experiments} to verify the theory and test the performance of these models. Conclusions and discussions about further research work are provided in Section \ref{sec5}.

\section{Fully-connected Neural Networks}\label{sec_FNN}
In this section, we consider the least squares problem \eqref{00} with $\phi(x_n;\theta)$ being an FNN. Such a problem appears widely in neural network-based regression.

\subsection{Least Squares Problems}
Suppose $d\geq1$ is the dimension and $\Omega$ is a subset of $\mathbb{R}^d$. Let $f:\Omega\rightarrow\mathbb{R}$ be an unknown target function. The learning problem is to identify $f$ given a dataset $\{(x_n,y_n)\}_{n=1}^N$, where $x_n\in\Omega$ is a feature vector and $y_n=f(x_n)$ is the corresponding label of $x_n$ for every $n$. By deep learning, one can identify $f$ by solving the optimization
\begin{equation}\label{01}
\min_\theta\frac{1}{N}\sum_{n=1}^N|\phi(x_n;\theta)-y_n|^2,
\end{equation}
where $\phi(x;\theta):\mathbb{R}^d\rightarrow\mathbb{R}$ is a deep neural network with input $x$ and parameter set $\theta$. 

Recall that an FNN $\phi$ of depth $L$ and width $M$ is defined as follows:
\begin{equation}\label{02}
\phi(x;\theta)=W_L\sigma(W_{L-1}\sigma(\dots\sigma(W_1x+b_1)\dots)+b_{L-1})+b_L,
\end{equation}
where $W_1\in \mathbb{R}^{M\times d}$, $W_2,\dots,W_{L-1}\in \mathbb{R}^{M\times M}$, $W_L\in \mathbb{R}^{1\times M}$ are weights; $b_1,\dots,b_{L-1}\in \mathbb{R}^{M\times 1}$, $b_L\in \mathbb{R}$ are biases; $\sigma$ is some activation function which is applied entry-wise to a vector to obtain another vector of the same size; $\theta=\{W_l,b_l\}_{l=1}^L$ is the set of all free parameters. Therefore, \eqref{01} can be rewritten as follows:
\begin{equation}\label{03}
\min_{W_l,b_l}\frac{1}{N}\sum_{n=1}^N\left|W_L\sigma(W_{L-1}\sigma(\dots\sigma(W_1x_n+b_1)\dots)+b_{L-1})+b_L-y_n\right|^2.
\end{equation}

For simplicity, we view $x_n$ as column vectors and use the notations
\begin{equation}
X:=[x_1~x_2~\dots~x_N]\in\mathbb{R}^{d\times N},\quad Y:=[y_1~y_2~\dots~y_N]\in\mathbb{R}^{1\times N}.
\end{equation}
Then \eqref{03} can rewritten again as follows:
\begin{equation}\label{LS_FNN}
\min_{W_l,b_l}\mathcal{L}:=\frac{1}{N}\|W_L\sigma(W_{L-1}\sigma(\dots\sigma(W_1X+b_1\Bone^\top)\dots)+b_{L-1}\Bone^\top)+b_L\Bone^\top-Y\|_\tF^2,
\end{equation}
where $\Bone$ is the all-one column vector of size $N$ and $\|\cdot\|_\tF$ is the Frobenius norm. We abbreviate the least squares deep learning model \eqref{LS_FNN} as LS-FNN.

\subsection{Models with Auxiliary Variables}\label{sec_PM_FNN}
Solving LS-FNN by common optimizers (e.g., gradient descent) is sometimes difficult. In particular, if the network is very deep, the mean squared loss $\mathcal{L}$ can be highly non-convex, and gradient descent may also suffer from vanishing gradient issues. In pioneering work \cite{Carreira-Perpinan2014}, the authors reformulated optimization \eqref{LS_FNN} by introducing auxiliary variables, each of which is for one layer of the neural network. Specifically, one considers the following constrained minimization,
\begin{equation}\label{05}
\begin{split}
\min_{W_l,b_l}&~\frac{1}{N}\|W_L\sigma(a_{L-1})+b_L\Bone^\top-Y\|_\tF^2\\
\text{s.t.}& \quad a_1 = W_1X+b_1\Bone^\top,\\
& \quad a_l=W_l\sigma(a_{l-1})+b_l\Bone^\top,\quad\forall~l=2,\dots,L-1,
\end{split}
\end{equation}
where $a_l\in\mathbb{R}^{M\times N}$ for $l=1,\dots,L-1$ are newly added auxiliary variables. We remark that the optimization \eqref{05} is not identical to that in \cite{Carreira-Perpinan2014}, where the bias vector is not involved, and $\sigma$ is located in the outermost part of the auxiliary variable. 

Note that the problem in \eqref{05} can be formulated as the following unconstrained minimization by adding $\ell^2$ penalty terms of the constraints to the loss function,
\begin{equation}\label{PM_FNN}
\begin{split}
&\min_{W_l,b_l,a_l} \LP:=\frac{1}{N}\Bigg[\|W_L\sigma(a_{L-1})+b_L\Bone^\top-Y\|_\tF^2\\
&+\sum_{l=2}^{L-1}\beta_l\|W_l\sigma(a_{l-1})+b_l\Bone^\top-a_l\|_\tF^2+\beta_1\|W_1X+b_1\Bone^\top-a_1\|_\tF^2\Bigg],
\end{split}
\end{equation}
where $\{\beta_l\}_{l=1}^{L-1}$ are prescribed positive constants that control the weights of these constraints. Compared with LS-FNN, the layer separation formulation \eqref{PM_FNN} decouples many variables, contributing to a less non-convex loss landscape. And it only has shallow architectures, hence alleviating the vanishing gradient problem to some extent. In particular, $\LP$ is convex in terms of the variables $\{W_l,b_l\}$. Even for $\{a_l\}$, they are applied with only one activation layer, which is much less non-convex than deep architectures. Consequently, it is more likely to find the global or ``good" local minimizers with a small training loss of the formulation \eqref{PM_FNN}. We abbreviate the penalty model \eqref{PM_FNN} as PM-FNN.

\subsection{Self-adaptive Weighted Auxiliary Variables}\label{sec_SAPM_FNN}
One limitation of PM-FNN is that the loss function $\LP$ is not consistent with the loss function $\mathcal{L}$ in LS-FNN. Even if a set of variables $\{W_l,b_l,a_l\}$ makes the squared loss function $\LP$ very close to 0, it cannot guarantee that $\mathcal{L}$ is also close to 0. To overcome this limitation, we propose the following formulation to improve PM-FNN:
\begin{equation}\label{SAPM_FNN}
\begin{split}
&\min_{W_l,b_l,a_l} \LS:=\frac{1}{N}\Bigg[\|W_L\sigma(a_{L-1})+b_L\Bone^\top-Y\|_\tF^2\\
&+\sum_{l=2}^{L-1}\beta_l\omega_l\|W_l\sigma(a_{l-1})+b_l\Bone^\top-a_l\|_\tF^2+\beta_1\omega_1\|W_1X+b_1\Bone^\top-a_1\|_\tF^2\Bigg].
\end{split}
\end{equation}
where 
\begin{equation}\label{27}
\omega_l=\prod_{j=l+1}^L\|W_j\|_\tF^2,\quad l=1,\ldots,L-1,
\end{equation}
are adaptive weights. The difference between $\LS$ and $\LP$ lies in these varying weights of the squared terms of the former, so we call \eqref{SAPM_FNN} self-adaptive weighted penalty model (SAPM-FNN) with auxiliary variables. We remark that the weights $\omega_l$ are explicitly represented by the variables $\{W_l\}$, so the variables $\{W_l\}$ are involved not only in the squared terms but in the weights. Fortunately, the loss function $\LS$ is still a quadratic form for each $W_l$, which allows closed-form minimizers in the alternating direction optimization (see Section \ref{sec_implementation_SAPM_FNN}).  

The following result guarantees that the mean squared loss $\mathcal{L}$ of LS-FNN is always bounded above by the weighted penalized loss $\LS$.

\begin{theorem}\label{thm01}
Suppose $\sigma$ is Lipschitz continuous, i.e., $|\sigma(z_1)-\sigma(z_2)|\leq B|z_1-z_2|$ for some $B>0$ and any $z_1,z_2\in\mathbb{R}$. Given $\mathcal{L}$ and $\LS$ defined in \eqref{LS_FNN} and \eqref{SAPM_FNN}, respectively. Then for all $\{W_l,b_l\}_{l=1}^L$ and $\{a_l\}_{l=1}^{L-1}$, it holds that
\begin{equation}\label{14}
\mathcal{L}\leq C_{B,\beta} \cdot L \cdot \LS,
\end{equation}
where $C_{B,\beta}=\max\{1,B^{2L-2}\}\cdot\max_{l=1,\dots,L-1}\{1,\beta_l^{-1}\}$.
\end{theorem}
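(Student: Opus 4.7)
The plan is to bound $\|\phi-Y\|_\tF$ by tracking how the SAPM constraint residuals propagate along the true forward pass of the network. For $l=1,\dots,L$ define
\begin{equation}
e_1 := W_1X+b_1\Bone^\top - a_1,\quad e_l := W_l\sigma(a_{l-1})+b_l\Bone^\top - a_l\ \ (2\leq l\leq L-1),\quad e_L := W_L\sigma(a_{L-1})+b_L\Bone^\top - Y,
\end{equation}
so that $\LS=\frac{1}{N}[\|e_L\|_\tF^2+\sum_{l=1}^{L-1}\beta_l\omega_l\|e_l\|_\tF^2]$. Let $s_l$ be the pre-activation at layer $l$ produced by actually running the network on $X$ (i.e.\ $s_1=W_1X+b_1\Bone^\top$ and $s_l=W_l\sigma(s_{l-1})+b_l\Bone^\top$ for $l\geq 2$), and set $r_l:=s_l-a_l$. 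Note $\mathcal{L}=\frac{1}{N}\|s_L-Y\|_\tF^2$.

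The core step is a one-line recurrence. Substituting the definitions gives $r_1=e_1$ and, for $l\geq 2$,
\begin{equation}
r_l = W_l\bigl[\sigma(s_{l-1})-\sigma(a_{l-1})\bigr] + e_l.
\end{equation}
Applying the Lipschitz hypothesis $\|\sigma(s_{l-1})-\sigma(a_{l-1})\|_\tF\leq B\|r_{l-1}\|_\tF$ together with the submultiplicativity $\|W_lM\|_\tF\leq\|W_l\|_\tF\|M\|_\tF$ yields $\|r_l\|_\tF\leq B\|W_l\|_\tF\|r_{l-1}\|_\tF+\|e_l\|_\tF$, and the same manipulation on the output layer yields $\|s_L-Y\|_\tF\leq B\|W_L\|_\tF\|r_{L-1}\|_\tF+\|e_L\|_\tF$. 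Unrolling, the product $\prod_{j=k+1}^L\|W_j\|_\tF$ accumulated while pushing $\|e_k\|_\tF$ from layer $k$ to the output is exactly $\sqrt{\omega_k}$ by \eqref{27}, producing
\begin{equation}
\|s_L-Y\|_\tF \leq \|e_L\|_\tF + \sum_{k=1}^{L-1} B^{L-k}\sqrt{\omega_k}\,\|e_k\|_\tF.
\end{equation}

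To finish, I apply the elementary inequality $\bigl(\sum_{k=1}^L c_k\bigr)^2\leq L\sum_{k=1}^L c_k^2$ (Cauchy--Schwarz against the all-ones vector, supplying the factor $L$ in the conclusion), divide by $N$, and compare coefficients with $\LS$. The $\|e_L\|_\tF^2$ term needs only $C_{B,\beta}\geq 1$, which is immediate from the definition. For $1\leq k\leq L-1$, the coefficient ratio $B^{2(L-k)}/\beta_k$ is dominated by $\max\{1,B^{2L-2}\}\cdot\max_l\{1,\beta_l^{-1}\}=C_{B,\beta}$, since $2(L-k)\in\{2,\dots,2L-2\}$ implies $B^{2(L-k)}\leq\max\{1,B^{2L-2}\}$ irrespective of whether $B<1$ or $B\geq 1$.

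The main obstacle is structural rather than computational: identifying which discrepancy to propagate. The quantity $r_l=s_l-a_l$ between the true pre-activation and the auxiliary variable is the right object precisely because its amplification under the recurrence accumulates the product $\prod_{j>l}\|W_j\|_\tF$ that appears inside $\omega_l$. This is the reason the self-adaptive weights in \eqref{27} take exactly that form: they cancel the multiplicative amplification a layer-$l$ constraint residual undergoes on its way to the output, which is what allows the consistency constant to be linear in $L$ rather than exponential.
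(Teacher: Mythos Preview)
Your proof is correct and follows essentially the same approach as the paper: both propagate the layerwise residuals through the network using the Lipschitz bound and submultiplicativity, unroll to obtain $(N\mathcal{L})^{1/2}\leq \|e_L\|_\tF+\sum_{k=1}^{L-1}B^{L-k}\sqrt{\omega_k}\,\|e_k\|_\tF$, and then square via Cauchy--Schwarz to pick up the factor $L$. Your version is somewhat cleaner in that you name the recurrence variable $r_l=s_l-a_l$ explicitly and spell out the final Cauchy--Schwarz and coefficient-comparison steps that the paper compresses into ``which leads to the desired result.''
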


\begin{proof}
By the triangle inequality, we have
\begin{multline}\label{09}
(N\mathcal{L})^{\frac{1}{2}}=\|W_L\sigma(W_{L-1}\sigma(\dots\sigma(W_1X+b_1\Bone^\top)\dots)+b_{L-1}\Bone^\top)+b_L\Bone^\top-Y\|_\tF\\
\leq \|W_L\sigma(a_{L-1})+b_L\Bone^\top-Y\|_\tF\\
+\|W_L\sigma(W_{L-1}\sigma(\dots\sigma(W_1X+b_1\Bone^\top)\dots)+b_{L-1}\Bone^\top)-W_L\sigma(a_{L-1})\|_\tF\\
\leq \|W_L\sigma(a_{L-1})+b_L\Bone^\top-Y\|_\tF\\
+\|W_L\|_\tF\cdot \|\sigma(W_{L-1}\sigma(\dots\sigma(W_1X+b_1\Bone^\top)\dots)+b_{L-1}\Bone^\top)-\sigma(a_{L-1})\|_\tF.
\end{multline}
By the Lipschitz continuity, it follows that
\begin{multline}\label{10}
\|\sigma(W_{L-1}\sigma(\dots\sigma(W_1X+b_1\Bone^\top)\dots)+b_{L-1}\Bone^\top)-\sigma(a_{L-1})\|_\tF\\
\leq B\|W_{L-1}\sigma(\dots\sigma(W_1X+b_1\Bone^\top)\dots)+b_{L-1}\Bone^\top-a_{L-1}\|_\tF.
\end{multline}
Therefore, we have
\begin{multline}\label{11}
(N\mathcal{L})^{\frac{1}{2}}\leq \|W_L\sigma(a_{L-1})+b_L\Bone^\top-Y\|_\tF\\
+B\|W_L\|_\tF\cdot \|W_{L-1}\sigma(\dots\sigma(W_1X+b_1\Bone^\top)\dots)+b_{L-1}\Bone^\top-a_{L-1}\|_\tF.
\end{multline}

Next, we handle the term $\|W_{L-1}\sigma(\dots\sigma(W_1X+b_1\Bone^\top)\dots)+b_{L-1}\Bone^\top-a_{L-1}\|_\tF$ by similar argument as in \eqref{09}-\eqref{10}. Then we obtain
\begin{multline}\label{12}
\|W_{L-1}\sigma(\dots\sigma(W_1X+b_1\Bone^\top)\dots)+b_{L-1}\Bone^\top-a_{L-1}\|_\tF\\
\leq \|W_{L-1}\sigma(a_{L-2})+b_{L-1}\Bone^\top-a_{L-1}\|_\tF\\
+B\|W_{L-1}\|_\tF\cdot \|W_{L-2}\sigma(\dots\sigma(W_1X+b_1\Bone^\top)\dots)+b_{L-2}\Bone^\top-a_{L-2}\|_\tF.
\end{multline}
Combining \eqref{11} and \eqref{12} leads to
\begin{multline}\label{13}
(N\mathcal{L})^{\frac{1}{2}}\leq \|W_L\sigma(a_{L-1})+b_L\Bone^\top-Y\|_\tF
+B\|W_L\|_\tF\cdot \|W_{L-1}\sigma(a_{L-2})+b_{L-1}\Bone^\top-a_{L-1}\|_\tF\\
+B^2\|W_L\|_\tF\|W_{L-1}\|_\tF\cdot \|W_{L-2}\sigma(\dots\sigma(W_1X+b_1\Bone^\top)\dots)+b_{L-2}\Bone^\top-a_{L-2}\|_\tF.
\end{multline}
Doing the preceding steps recursively, we finally have
\begin{equation}
\begin{split}
&(N\mathcal{L})^{\frac{1}{2}}\leq \|W_L\sigma(a_{L-1})+b_L\Bone^\top-Y\|_\tF\\
&+\sum_{l=2}^{L-1}B^{L-l}\left(\prod_{j=l+1}^L\|W_j\|_\tF\right)\|W_l\sigma(a_{l-1})+b_l\Bone^\top-a_l\|_\tF\\
&+B^{L-1}\left(\prod_{j=2}^L\|W_j\|_\tF\right)\|W_1X+b_1\Bone^\top-a_1\|_\tF,
\end{split}
\end{equation}
which leads to the desired result given in \eqref{14}.
\end{proof}

Note that for activations with the Lipschitz constant $B\leq1$ (e.g., ReLU activation), the constant $C_{B,\beta}$ is independent of $L$. Then Theorem \ref{thm01} implies that $\mathcal{L}\leq O(L)\LS$; namely, the conventional mean squared loss is no more than $O(L)$ times the weighted loss.

On the other hand, $\LS$ is also bounded by $\mathcal{L}$ in the following sense.
\begin{theorem}\label{thm02}
For all $\{W_l,b_l\}_{l=1}^L$, there exists $\{a_l\}_{l=1}^{L-1}$ such that
\begin{equation}
\LS\leq\mathcal{L}.
\end{equation}
\end{theorem}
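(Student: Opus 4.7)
The approach is entirely constructive: given any weights $\{W_l,b_l\}_{l=1}^L$, I would simply take the auxiliary variables $\{a_l\}_{l=1}^{L-1}$ to be exactly the pre-activation values produced by the forward pass of the network. That is, set $a_1 := W_1 X + b_1\Bone^\top$ and recursively $a_l := W_l\sigma(a_{l-1}) + b_l\Bone^\top$ for $l = 2,\dots,L-1$.

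With this choice, every constraint penalty term inside $\LS$ vanishes identically: we have $W_1 X + b_1\Bone^\top - a_1 = 0$ by definition, and likewise $W_l\sigma(a_{l-1}) + b_l\Bone^\top - a_l = 0$ for each $l = 2,\dots,L-1$. Hence every weighted penalty $\beta_l\omega_l\|\cdot\|_\tF^2$ drops out of $\LS$, regardless of the (in general nonzero) values of the adaptive weights $\omega_l$. This is the whole point of the construction: the weights $\omega_l$, although they depend on $\{W_j\}_{j=l+1}^L$, multiply quantities that have already been forced to zero.

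It remains to identify the surviving term. By unrolling the recursion, $\sigma(a_{L-1}) = \sigma\bigl(W_{L-1}\sigma(\dots\sigma(W_1 X + b_1\Bone^\top)\dots) + b_{L-1}\Bone^\top\bigr)$, which is precisely the penultimate activation of the FNN in \eqref{LS_FNN}. Therefore
\begin{equation*}
\LS = \frac{1}{N}\|W_L\sigma(a_{L-1}) + b_L\Bone^\top - Y\|_\tF^2 = \mathcal{L},
\end{equation*}
and in particular $\LS \leq \mathcal{L}$, as claimed.

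There is essentially no obstacle here — the statement is an existence result, not a uniform bound, so the freedom to choose $\{a_l\}$ dependent on $\{W_l,b_l\}$ reduces the whole argument to matching the forward pass. The only thing worth noting is that the result actually gives equality, and that it holds independently of both the Lipschitz constant $B$ and the penalty parameters $\beta_l$; together with Theorem \ref{thm01} this gives the two-sided consistency $\mathcal{L} \leq C_{B,\beta}L\,\LS$ and $\inf_{\{a_l\}}\LS \leq \mathcal{L}$, which is the pair of bounds the paper will use later.
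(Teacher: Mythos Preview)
Your proof is correct and takes exactly the same approach as the paper: choose the auxiliary variables to be the true forward-pass pre-activations, so that all penalty terms vanish and $\LS = \mathcal{L}$. The paper's proof is just the one-line version of what you wrote.
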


\begin{proof}
It suffices to let $a_1 = W_1X+b_1\Bone^\top$ and $a_l=W_l\sigma(a_{l-1})+b_l\Bone^\top$ for $l=2,\dots,L-1$. In this case, the equality holds.
\end{proof}

Same as the penalized loss $\LP$, the weighted penalized loss $\LS$ remains convex in terms of every $W_l$ and $b_l$, and preserves shallow architectures of $\{a_l\}$. This allows us to develop efficient alternating direction optimizers (see Section \ref{sec_implementation_SAPM_FNN}). Comparatively, the mean squared loss $\mathcal{L}$ has deep architectures of the variables, bringing difficulties in finding ``good" optima using common optimizers. In Section \ref{sec_experiments}, we will show that SAPM-FNN in \eqref{SAPM_FNN} can achieve higher numerical accuracy in learning problems than PM-FNN and LS-FNN.

\subsection{Implementation}\label{sec_implementation_SAPM_FNN}
In the previous work \cite{Taylor2016,Wang2019}, alternating direction approaches are developed to minimize $\LP$, where minimization sub-problems for $\{W_l,b_l\}$ can be solved globally in closed form. We follow this strategy to develop algorithms for the minimization of $\LS$. For notational simplicity, we assume that $\beta_1=\dots=\beta_{L-1}=1$ in $\LS$, and the following discussion can be easily applied to general $\{\beta_l\}$.

In every iteration, we update $W_l$ with other variables fixed. To reduce $\LS$, it is equivalent to reduce 
\begin{equation}\label{17}
\|W_lA_l-P_l\|_\tF^2+\lambda_l\|W_l\|_\tF^2,
\end{equation}
where 
\begin{equation}
A_l=\begin{cases}\sigma(a_{l-1}),& 2\leq l\leq L,\\X,& l=1,\end{cases}\quad P_l=\begin{cases}Y-b_L\Bone^\top,& l=L,\\a_l-b_l\Bone^\top,& 1\leq l\leq L-1,\end{cases}
\end{equation}
and
\begin{equation}
\lambda_l=\begin{cases}\sum_{i=1}^{l-1}\left(\prod_{j=i+1}^{l-1}\|W_j\|_\tF^2\right)\|W_iA_i-P_i\|_\tF^2, & 2\leq l\leq L,\\ 0,& l = 1,\end{cases}
\end{equation}
are all fixed. Note that the minimizer $W_l$ of \eqref{17} is exactly the least square solution of the linear system 
\begin{equation}\label{25}
W_l\begin{bmatrix}A_l & \sqrt{\lambda_l}I\end{bmatrix}=\begin{bmatrix}P_l & O\end{bmatrix},
\end{equation}
where $I$ is the identity matrix and $O$ is the zero matrix. Therefore, in every iteration, we can update $W_l$ by solving \eqref{25} in the least square sense.

Similarly, $b_l$ is updated as the least square solution of the linear system 
\begin{equation}\label{26}
b_l\Bone^\top=\begin{cases}Y-W_lA_l,& l=L,\\a_l-W_lA_l,& 1\leq l\leq L-1.\end{cases}
\end{equation}
Note that its least square solution is exactly the mean of the columns on the right hand side.

However, due to the nonlinearity of the activation function $\sigma$, the minimization sub-problems for $\{a_l\}$ can not be solved in closed form in general (although closed forms exist for some simple activations such as ReLU activation \cite{Taylor2016}). So we propose to update $\{a_l\}$ by gradient descent using a line search subroutine. Note that each $a_l$ is involved only in two terms of $\LS$, and reducing $\LS$ is equivalent to reducing
\begin{equation}
\mathcal{S}_l(a_l):=\|W_{l+1}\sigma(a_l)-P_{l+1}\|_\tF^2+\|W_{l+1}\|_\tF^2\|W_lA_l+b_l\Bone^\top-a_l\|_\tF^2,
\end{equation}
whose gradient is
\begin{equation}
\begin{split}
\nabla_{a_l}\mathcal{S}_l=2\Big[\left(W_{l+1}^\top(W_{l+1}\sigma(a_l)-P_{l+1})\right)*\sigma'(a_l)+\|W_{l+1}\|_\tF^2(a_l-W_lA_l-b_l\Bone^\top)\Big],
\end{split}
\end{equation}
given that $\sigma$ is differentiable. Here $*$ means entry-wise multiplication of two matrices of the same size.

In summary, we propose the algorithm that alternatively updates $W_l$, $b_l$, $a_l$ for every $l$ from large to small (see Algorithm \ref{alg01}). We use $\tau>0$ to denote the learning rate in the gradient descent update. Note that $\tau$ can be various and adaptive for different updates. 

\begin{algorithm}
\DontPrintSemicolon
\KwIn{data $X$, $Y$; number of iterations $N_k$; learning rate $\tau$}
\KwOut{a feasible solution $\{W_l,b_l,a_l\}$.}
\Begin{Initialize $\{W_l,b_l,a_l\}_{l=1}^L$\;
\For{$k = 0,\cdots,N_k-1$}{
Solve $W_L\begin{bmatrix}A_L & \sqrt{\lambda_L}I\end{bmatrix}=\begin{bmatrix}P_L & O\end{bmatrix}$ for $W_L$\;
Solve $b_L\Bone^\top=Y-W_LA_L$ for $b_L$\;
\For{$l = L-1,\cdots,1$}{
$a_l\leftarrow a_l-\tau\nabla_{a_l}\mathcal{S}_l$,\;
Solve $W_l\begin{bmatrix}A_l & \sqrt{\lambda_l}I\end{bmatrix}=\begin{bmatrix}P_l & O\end{bmatrix}$ for $W_l$\;
Solve $b_l\Bone^\top=a_l-W_lA_l$ for $b_l$\;}}
return $\{W_l,b_l,a_l\}_{l=1}^L$}
\caption{Solve $\min_{W_l,b_l,a_l} \LS$\label{alg01}}
\end{algorithm}

In the previous work \cite{Taylor2016,Wang2019}, the linear system for $\{W_l\}$ is formulated as $W_lA_l=P_l$, which may suffer from ill-conditioning. Recall that $A_l=\sigma(a_{l-1})$ for $l\geq2$. Suppose $\sigma$ is the ReLU activation. If $a_{l-1}$ has rows whose entries are all negative, then the corresponding rows of $\sigma(a_{l-1})$ are zero, making row rank deficient. The similar issue appears for sigmoidal activations that $\sigma(a_{l-1})$ has nearly zero rows if $a_{l-1}$ has negative rows with large moduli. However, this issue is almost overcome in our formulation. Thanks to the existence of $\sqrt{\lambda_n}I$ in the coefficient matrix in \eqref{25}, the linear system has row full rank for $l\geq2$ when $\lambda_n$ is hardly to be zero.

\section{Physics-informed Neural Networks}\label{sec_PINN}
In this section, we consider the least squares problem in \eqref{00} with $\phi(x_n;\theta)$ being a PINN. PINNs are the resulting functions when applying physically relevant differential operators to standard neural networks (e.g., FNNs). For convenience, we only consider the PINNs derived from first-order linear equations in this paper. Models for other types of PDEs can be formulated in similar ways. 

\subsection{Least Squares Models for First-Order Linear Equations}
Suppose $d\geq1$ is the dimension and $\Omega$ is a smooth domain in $\mathbb{R}^d$. We consider the following first-order linear equation:
\begin{gather}
c(x)\cdot\nabla u(x)=f(x),\quad\text{for}~x\in\Omega,\label{15}\\
u(x)=g(x),\quad\text{for}~x\in\Gamma,\label{16}    
\end{gather}
where $u:\Omega\rightarrow\mathbb{R}$ is the unknown solution; $c:\Omega\rightarrow\mathbb{R}^d$ is a given vector-valued function with each component being $c_i$; $f:\Omega\rightarrow\mathbb{R}$ is a given data function; $\Gamma\subset\partial\Omega$ is a subset of the boundary; $g:\Gamma\rightarrow\mathbb{R}$ is a given initial/boundary value function. For physical steady-state problems, $x$ is a $d$-dimensional spatial variable, and \eqref{16} specifies the Dirichlet condition on part of the boundary. For physical time-dependent problems, $x$ consists of a one-dimensional time variable and a $(d-1)$-dimensional spatial variable, and \eqref{16} formulates the initial and Dirichlet boundary conditions. 

To solve the problem via deep learning, one can use a neural network to approximate the solution. We introduce an FNN $\psi(x;\theta):\mathbb{R}^d\rightarrow\mathbb{R}$ to approximate $u$ in $\Omega$. Recall $\theta=\{W_l,b_l\}_{l=1}^L$ is the set of neural network parameters shown in \eqref{02}. Then applying the differential operator in \eqref{15} to $\psi$ leads to a PINN $\phi$, namely,
\begin{equation}\label{21}
\phi(x;\theta):=c(x)\cdot\nabla \psi(x;\theta).
\end{equation}
Next, $\phi$ (or $\psi$) can be trained by the least squares regression using the data derived from the problem \eqref{15}-\eqref{16}. For the PDE \eqref{15}, we let $\{x_n^{(1)}\}_{n=1}^{N_1}$ be a set of feature points in $\Omega$, and let $y_n^{(1)}=f(x_n^{(1)})$ for every $n$, then $\{(x_n^{(1)},y_n^{(1)})\}_{n=1}^{N_1}$ forms a training dataset. The least squares model for $\phi$ is given by
\begin{equation}\label{18}
\min_\theta\frac{1}{N_1}\sum_{n=1}^{N_1}\left|\phi(x_n^{(1)};\theta)-y_n^{(1)}\right|^2.
\end{equation}
Similarly, for the initial/boundary condition \eqref{16}, we let $\{x_n^{(2)}\}_{n=1}^{N_2}$ be a set of sample points in $\Gamma$, and let $y_n^{(2)}$ be the label of $x_n^{(2)}$ defined by $y_n^{(2)} = g(x_n^{(2)})$,
then $\{(x_n^{(2)},y_n^{(2)})\}_{n=1}^{N_2}$ forms another training dataset. The least squares model for $\psi$ is conducted by
\begin{equation}\label{19}
\min_\theta\frac{1}{N_2}\sum_{n=1}^{N_2}\left|\psi(x_n^{(2)};\theta)-y_n^{(2)}\right|^2.
\end{equation}

Combining \eqref{18} and \eqref{19} leads to the following least squares PINN model (LS-PINN) for the problem in \eqref{15}-\eqref{16}:
\begin{multline}\label{28}
\min_\theta\mathcal{J}:=\frac{1}{N_1}\sum_{n=1}^{N_1}\left|c(x_n^{(1)})\cdot\nabla \psi(x_n^{(1)};\theta)-y_n^{(1)}\right|^2+\frac{\mu}{N_2}\sum_{n=1}^{N_2}|\psi(x_n^{(2)};\theta)-y_n^{(2)}|^2,
\end{multline}
where $\mu>0$ is a scalar weight.

We follow the FNN representation \eqref{02} for $\phi$, and use the notations
\begin{equation}
\begin{split}
X^{(k)}:=&[x_1^{(k)}~x_2^{(k)}~\dots~x_{N_k}^{(k)}]\in\mathbb{R}^{d\times N_k},\\
Y^{(k)}:=&[y_1^{(k)}~y_2^{(k)}~\dots~y_{N_k}^{(k)}]\in\mathbb{R}^{1\times N_k}
\end{split}
\end{equation}
for $k=1$ and $2$.
Then \eqref{28} can be rewritten as follows:
\begin{equation}\label{LS_PINN}
\min_{W_l,b_l}\mathcal{J}=\mathcal{J}^{(1)}+\mu\mathcal{J}^{(2)},
\end{equation}
with
\begin{multline*}
\mathcal{J}^{(1)}:=\frac{1}{N_1}\Big\|\sum_{i=1}^d c_i(X^{(1)})*\pxi\Big(W_L\sigma(W_{L-1}\sigma(\dots\sigma(W_1X^{(1)}+b_1\Bone^\top)\dots)\\
+b_{L-1}\Bone^\top)+b_L\Bone^\top\Big)-Y^{(1)}\Big\|_\tF^2,
\end{multline*}
and
\begin{equation*}
    \mathcal{J}^{(2)}:=\frac{1}{N_2}\|W_L\sigma(W_{L-1}\sigma(\dots\sigma(W_1X^{(2)}+b_1\Bone^\top)\dots)+b_{L-1}\Bone^\top)+b_L\Bone^\top-Y^{(2)}\|_\tF^2.
\end{equation*}

\subsection{Models with Auxiliary Variables}
Note that PINNs are the differentiation of FNNs under specific differential operators. If the FNNs are deep, the corresponding PINNs are also deep nonlinear structures with their parameters. Consequently, the strategy of PM-FNN discussed in Section \ref{sec_PM_FNN} can also be used to improve LS-PINN. 

We introduce the auxiliary variables 
\begin{equation}\label{22}
a_l^{(k)}=\begin{cases}
W_1X^{(k)}+b_1\Bone^\top,\quad l=1,\\
W_l\sigma(a_{l-1}^{(k)})+b_l\Bone^\top,\quad l=2,\dots,L,
\end{cases}
\end{equation}
for $k=1$ and $2$.

We consider the representation of $\pxi \phi$ in terms of auxiliary variables, where  $\pxi $ means taking the derivative of the $i$-th component of $x$. For this purpose, we introduce another set of auxiliary variables, i.e.
\begin{equation}\label{23}
d_{l,i}=\pxi a_l^{(1)},\quad l=1,\cdots,L.
\end{equation}
Using the expression \eqref{22} and chain rule, it follows \eqref{23} that 
\begin{equation}\label{24}
d_{l,i}=\begin{cases}
W_1(:,i)\Bone^\top,\quad l=1,\\
W_l\left(\sigma'(a_{l-1}^{(1)})*d_{l-1,i}\right),\quad l=2,\cdots,L,
\end{cases}
\end{equation}
where $W_1(:,i)$ means the $i$-th column of $W_1$. Note that 
\begin{equation}
a_L^{(k)}=\left[\phi(x_1^{(1)})~\phi(x_2^{(1)})~\cdots~\phi(x_{N_1}^{(1)})\right]
\end{equation}
and
\begin{equation}
d_{L,i}=\left[\pxi \phi(x_1^{(1)})~\pxi \phi(x_2^{(1)})~\cdots~\pxi \phi(x_{N_1}^{(1)})\right],
\end{equation}
the minimization \eqref{LS_PINN} can be rewritten as the constrained form
\begin{equation}\label{20}
\begin{split}
\min_{W_l,b_l} &~\frac{1}{N_1}\left\|\sum_{i=1}^{d}c_{i}(X^{(1)})*d_{L,i}-Y^{(1)}\right\|_\tF^2+\frac{\mu}{N_2}\left\|a_L^{(2)}-Y^{(2)}\right\|_\tF^2\\
\text{s.t.} &~\eqref{22}~\text{and}~\eqref{24},
\end{split}
\end{equation}
where $c_i(X^{(1)}):=\left[c_i(x_1^{(1)})~c_i(x_2^{(1)})~\cdots~c_i(x_{N_1}^{(1)})\right]$.

By similar argument in Section \ref{sec_FNN}, we can derive the standard penalty model (PM-PINN) from \eqref{20}, which is
\begin{equation}\label{PM_PINN}
\min_{W_l,b_l,a_l^{(1)},a_l^{(2)},d_l} \JP:=\JP^{(1)}+\mu \JP^{(2)}
\end{equation}
with
\begin{multline}
\JP^{(1)}:=\frac{1}{N_1}\Bigg[\left\|\sum_{i=1}^{d}c_{i}(X^{(1)})*d_{L,i}-Y^{(1)}\right\|_\tF^2\\
+\sum_{l=2}^{L-1}\beta_l^{(1)}\left\|W_l\sigma(a_{l-1}^{(1)})+b_l\Bone^\top-a_l^{(1)} \right\|_\tF^2+\beta_1^{(1)}\left\|W_1X^{(1)}+b_1\Bone^\top-a_1^{(1)}\right\|_\tF^2\\
+\sum_{l=2}^L\alpha_l^{(1)}\sum_{i=1}^{d}\left\|W_l\left(\sigma'(a_{l-1}^{(1)})*d_{l-1,i}\right)-d_{l,i}\right\|_\tF^2
+\alpha_1^{(1)}\sum_{i=1}^{d}\left\|W_1(:,i)\Bone^\top-d_{1,i}\right\|_\tF^2\Bigg]
\end{multline}
and
\begin{multline}
\JP^{(2)}:=\frac{1}{N_2}\Bigg[\left\|W_L\sigma(a_{L-1}^{(2)})+b_L\Bone^\top-Y^{(2)}\right\|_\tF^2\\
+\sum_{l=2}^{L-1}\beta_l^{(2)}\left\|W_l\sigma(a_{l-1}^{(2)})+b_l\Bone^\top-a_l^{(2)} \right\|_\tF^2+\beta_1^{(2)}\left\|W_1X^{(2)}+b_1\Bone^\top-a_1^{(2)}\right\|_\tF^2\Bigg],
\end{multline}
where $\mu$, $\beta_l^{(1)}$, $\alpha_l^{(1)}$, $\beta_l^{(2)}$ are fixed positive scalar weights. 

Like the gap between PM-FNN and LS-FNN, the models PM-PINN and LS-PINN are inconsistent. So we can not optimize $\mathcal{J}$ by actually optimizing $\JP$. Numerical results also show that the PDE solution error may not reduce even if $\JP$ is decreased to a low level (see Section \ref{sec_experiments}).

\subsection{Self-adaptive Weighted Auxiliary Variables}\label{sec_SA_PINN}
Following the strategy of SAPM-FNN discussed in Section \ref{sec_SAPM_FNN}, we introduce self-adaptive weights to improve PM-PINN. Specifically, we define diagonal weight matrices 
\begin{equation*}
\Omega_l:=\diag\left(\sqrt{\sum_{i=1}^d\sum_{j=l}^{L-1}\|d_{j,i}[1]\|_\tF^2} ,\dots,\sqrt{\sum_{i=1}^d\sum_{j=l}^{L-1}\|d_{j,i}[N_1]\|_\tF^2}\right),
\end{equation*}
for $l=1,\dots,L-1$, where the notation $A[n]$ means the $n$-th column of a matrix $A$. Also, we extend the definition \eqref{27} of $w_l$ by letting $\omega_L=1$. Then we propose the following self-adaptive weighted penalty model (SAPM-PINN):
\begin{equation}\label{SAPM_PINN}
\min_{W_l,b_l,a_l^{(1)},a_l^{(2)},d_{l,i}} \JS:=\JS^{(1)}+\mu \JS^{(2)}
\end{equation}
with
\begin{multline}
\JS^{(1)}:=\frac{1}{N_1}\Bigg[\Big\|\sum_{i=1}^d c_i(X^{(1)})*d_{L,i}-Y^{(1)}\Big\|^2\\
+\sum_{l=2}^{L-1}\beta_l^{(1)}\omega_l\Big\|\Big(W_l\sigma(a_{l-1}^{(1)})+b_l\Bone^\top-a_l^{(1)}\Big)\Omega_l\Big\|_\tF^2+\beta_1^{(1)}\omega_1\Big\|\Big(W_1X^{(1)}+b_1\Bone^\top-a_1^{(1)}\Big)\Omega_1\Big\|_\tF^2\\
+\sum_{l=2}^{L}\alpha_l^{(1)}\omega_l\sum_{i=1}^d\|W_l(\sigma^{'}(a_{l-1}^{(1)})*d_{l-1,i})-d_{l,i}\|_\tF^2+\alpha_1^{(1)}\omega_1\sum_{i=1}^d\|W_1(:,i)\Bone^\top-d_{1,i}\|_\tF^2\Bigg],
\end{multline}
and
\begin{multline}
\JS^{(2)}:=\frac{1}{N_2}\Bigg[\|W_L\sigma(a_{L-1}^{(2)})+b_L\Bone^\top-Y^{(2)}\|_\tF^2\\
+\sum_{l=2}^{L-1}\beta_l^{(2)}\omega_l\|W_l\sigma(a_{l-1}^{(2)})+b_l\Bone^\top-a_l^{(2)}\|_\tF^2+\beta_1^{(2)}\omega_1\|W_1X^{(2)}+b_1\Bone^\top-a_1^{(2)}\|_\tF^2\Bigg].
\end{multline}

The consistency between $\JS$ and $\mathcal{J}$ is illustrated by the following theory.

\begin{theorem}\label{thm03}
Suppose $\sigma$ and $\sigma^{'}$ are Lipschitz continuous, and $\sigma^{'}$ is a bounded function, i.e., $|\sigma(z_1)-\sigma(z_2)|\leq C_{\sigma}|z_1-z_2|$, $|\sigma^{'}(z_1)-\sigma^{'}(z_2)|\leq C_{\sigma^{'}}|z_1-z_2|$, and $|\sigma^{'}(z_1)|\leq B_{\sigma^{'}}$, for some $C_{\sigma}, C_{\sigma^{'}}, B_{\sigma^{'}}>0$, any $z_1,z_2\in\mathbb{R}$.  Also, suppose $c_i$ is a bounded function, i.e., $|c_i(z)|\leq B_c$ for some $B_c>0$, any $z\in\mathbb{R}^d$, $i=1,\ldots,d$. Given $\mathcal{J}$ and $\JS$ defined in \eqref{LS_PINN} and \eqref{SAPM_PINN}, respectively. Then for all $\{W_l,b_l\}_{l=1}^L$, $\{a_l^{(1)},a_l^{(2)}\}_{l=1}^{L-1}$ and $\{d_{l,i}\}_{l=1,\dots,L}^{i=1,\ldots,d}$, it holds that
\begin{equation}\label{eq:00}
\mathcal{J}\leq \widetilde{C}  d  L (L+1) \JS,
\end{equation}
where 
\begin{multline}
\widetilde{C}=\max\Bigg\{\max_{l=1,\dots,L}\Big\{1,(\alpha_l^{(1)})^{-1},(\beta_l^{(1)})^{-1}\Big\}
\\
\cdot\max\Big\{1,B_c^2,B_c^2 B_{\sigma^{'}}^{2L-2},B_c^2 C_{\sigma^{'}}^2\max\{B_{\sigma^{'}}^{2L-4},C_{\sigma}^{2L-4}\},B_c^2 C_{\sigma^{'}}^2\max\{B_{\sigma^{'}}^{2L-6},C_{\sigma}^{2L-6}\}\Big\},\\
\max_{l=1,\dots,L}\Big\{1,(\beta_l^{(2)})^{-1}\Big\}\cdot \max\{1,C_\sigma^{2L-2}\}\Bigg\}.
\end{multline}
\end{theorem}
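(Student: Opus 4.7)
The plan is to separate $\mathcal{J}=\mathcal{J}^{(1)}+\mu\mathcal{J}^{(2)}$ and bound each piece. The boundary part $\mathcal{J}^{(2)}$ is straightforward: $\JS^{(2)}$ has exactly the structure of $\LS$ (with $X^{(2)}$ in place of $X$ and weights $\beta_l^{(2)}$), so a verbatim application of Theorem~\ref{thm01} already yields $\mathcal{J}^{(2)}\leq C_{\sigma,\beta^{(2)}}\cdot L\cdot \JS^{(2)}$ and contributes the last bracket of $\widetilde{C}$. All the real work is in the PDE part $\mathcal{J}^{(1)}$. For clarity I would introduce $\hat{a}_l^{(1)}$ and $\hat{d}_{l,i}$ for the \emph{true} forward-pass activations and $x_i$-derivatives generated from the current weights by the recursions \eqref{22}--\eqref{24} (so $\hat{d}_{L,i}=\partial_{x_i}\psi$ on $X^{(1)}$), reserving $a_l^{(1)}$ and $d_{l,i}$ for the free auxiliary variables.

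Inserting $\sum_i c_i(X^{(1)})*d_{L,i}$ by the triangle inequality and using $|c_i|\leq B_c$ yields
\begin{equation*}
(N_1\mathcal{J}^{(1)})^{1/2} \leq \Bigl\|\textstyle\sum_{i=1}^d c_i(X^{(1)})*d_{L,i}-Y^{(1)}\Bigr\|_\tF + B_c\sum_{i=1}^d\|\hat{d}_{L,i}-d_{L,i}\|_\tF,
\end{equation*}
in which the first summand is already bounded by $(N_1\JS^{(1)})^{1/2}$. To bound each $\|\hat{d}_{L,i}-d_{L,i}\|_\tF$ I would write
\begin{equation*}
\hat{d}_{l,i}-d_{l,i}=\bigl[W_l(\sigma'(a_{l-1}^{(1)})*d_{l-1,i})-d_{l,i}\bigr]+W_l\bigl[\sigma'(\hat{a}_{l-1}^{(1)})*\hat{d}_{l-1,i}-\sigma'(a_{l-1}^{(1)})*d_{l-1,i}\bigr],
\end{equation*}
split the second bracket by adding and subtracting $\sigma'(a_{l-1}^{(1)})*\hat{d}_{l-1,i}$, and apply $|\sigma'(z_1)-\sigma'(z_2)|\leq C_{\sigma'}|z_1-z_2|$ and $|\sigma'|\leq B_{\sigma'}$ to obtain the recursion
\begin{equation*}
\|\hat{d}_{l,i}-d_{l,i}\|_\tF\leq r_{l,i}^{d}+B_{\sigma'}\|W_l\|_\tF\,\|\hat{d}_{l-1,i}-d_{l-1,i}\|_\tF+C_{\sigma'}\|W_l\|_\tF\,\bigl\|(\hat{a}_{l-1}^{(1)}-a_{l-1}^{(1)})*\hat{d}_{l-1,i}\bigr\|_\tF,
\end{equation*}
where $r_{l,i}^{d}=\|W_l(\sigma'(a_{l-1}^{(1)})*d_{l-1,i})-d_{l,i}\|_\tF$ is precisely the residual weighted by $\alpha_l^{(1)}\omega_l$ in $\JS^{(1)}$.

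Unrolling this recursion from $l=L$ down to $l=1$ produces a weighted sum of $d$-residuals $r_{l,i}^{d}$ plus cross terms of the form $\|(\hat{a}_k^{(1)}-a_k^{(1)})*\hat{d}_{k,i}\|_\tF$, whose prefactors $B_{\sigma'}^{L-k}\prod_{j>k}\|W_j\|_\tF$ are absorbed into $\omega_k$. For the cross terms I would use the entrywise bound $\|A*B\|_\tF^2\leq\sum_{m,n}A[m,n]^2\|B[:,n]\|^2$ together with the defining property $\Omega_k[n,n]^2\geq\sum_i\sum_{j\geq k}\|d_{j,i}[:,n]\|^2$. Splitting $\hat{d}_{k,i}=d_{k,i}+(\hat{d}_{k,i}-d_{k,i})$ then bounds each cross term by $\|(\hat{a}_k^{(1)}-a_k^{(1)})\Omega_k\|_\tF$ plus a correction that folds back into the $d$-recursion. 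Finally $\hat{a}_k^{(1)}-a_k^{(1)}$ itself obeys the $a$-recursion of Theorem~\ref{thm01} on $X^{(1)}$, expanding into a weighted sum of the $\beta_l^{(1)}\omega_l$-penalized residuals; pairing this expansion with the $\Omega_k$ factor reproduces exactly the $\beta_l^{(1)}\omega_l\|(\,\cdot\,)\Omega_l\|_\tF^2$ terms of $\JS^{(1)}$.

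The hard part is the bookkeeping: one must unroll two nested recursions (for $\hat{a}-a$ and for $\hat{d}-d$) simultaneously, verify that every resulting term lands inside $\JS^{(1)}$ with the correct $\omega_l$, $\Omega_l$, $\alpha_l^{(1)}$ or $\beta_l^{(1)}$ weight, and count how the $d$ coordinates, the $O(L)$ outer steps of the $d$-recursion and the $O(L)$ inner steps of each $a$-recursion combine to yield the $dL(L+1)$ prefactor in~\eqref{eq:00}. The constant $\widetilde{C}$ then simply records the worst-case products of $B_c$, $B_{\sigma'}$, $C_{\sigma'}$, $C_\sigma$ and the inverses $(\alpha_l^{(1)})^{-1}$, $(\beta_l^{(1)})^{-1}$, $(\beta_l^{(2)})^{-1}$ picked up along both recursions and in the boundary estimate.
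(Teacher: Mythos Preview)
Your overall strategy---separate boundary and interior parts, invoke Theorem~\ref{thm01} for $\mathcal{J}^{(2)}$, and for $\mathcal{J}^{(1)}$ unroll a recursion for $\hat d_{l,i}-d_{l,i}$ coupled to the $a$-recursion of Theorem~\ref{thm01}---is exactly the paper's approach. The gap is in your choice of the add/subtract point in the product-rule step, and it is not merely cosmetic.

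You insert $\sigma'(a_{l-1}^{(1)})*\hat d_{l-1,i}$, which leaves the cross term $\|(\hat a_{l-1}^{(1)}-a_{l-1}^{(1)})*\hat d_{l-1,i}\|_\tF$ carrying the \emph{true} derivative $\hat d_{l-1,i}$. You then try to recover the $\Omega_l$ weight by writing $\hat d_{k,i}=d_{k,i}+(\hat d_{k,i}-d_{k,i})$ and claim the correction ``folds back into the $d$-recursion''. But that correction is $\|(\hat a_k^{(1)}-a_k^{(1)})*(\hat d_{k,i}-d_{k,i})\|_\tF$, a product of two error quantities. There is no a~priori bound on either factor in terms of $\JS^{(1)}$ alone, so this term cannot be absorbed into a linear estimate of the form $\mathcal{J}^{(1)}\le C\,\JS^{(1)}$; feeding it back into the recursion only produces higher-order products of residuals. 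The argument does not close.

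The fix is simply to swap the inserted term: add and subtract $\sigma'(\hat a_{l-1}^{(1)})*d_{l-1,i}$ instead. Then the two pieces are
\[
\sigma'(\hat a_{l-1}^{(1)})*(\hat d_{l-1,i}-d_{l-1,i})
\quad\text{and}\quad
\bigl(\sigma'(\hat a_{l-1}^{(1)})-\sigma'(a_{l-1}^{(1)})\bigr)*d_{l-1,i},
\]
giving the recursion
\[
\|\hat d_{l,i}-d_{l,i}\|_\tF\le r_{l,i}^{d}+B_{\sigma'}\|W_l\|_\tF\,\|\hat d_{l-1,i}-d_{l-1,i}\|_\tF+C_{\sigma'}\|W_l\|_\tF\,\bigl\|(\hat a_{l-1}^{(1)}-a_{l-1}^{(1)})*d_{l-1,i}\bigr\|_\tF.
\]
Now the cross term carries the auxiliary variable $d_{l-1,i}$, whose column norms are exactly what sits inside $\Omega_l$, so the subsequent pairing with the $a$-recursion (done per sample $n$) lands directly on the $\beta_l^{(1)}\omega_l\|(\cdot)\Omega_l\|_\tF^2$ terms of $\JS^{(1)}$ with no leftover. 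This is what the paper does, and with this one change the rest of your bookkeeping sketch (two nested linear recursions, $O(dL^2)$ terms after Cauchy--Schwarz, constants as stated) goes through.
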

\begin{proof}
Let 
\begin{equation}
\mathcal{J}_n^{(1)}=\Big|\sum_{i=1}^d c_i(x_n^{(1)})\pxi(W_L\sigma(W_{L-1}\sigma(\dots\sigma(W_1x_n^{(1)}+b_1)\dots)+b_{L-1})+b_L)-y_n^{(1)}\Big|^2.
\end{equation}
Then $\mathcal{J}^{(1)}=\frac{1}{N_1}\sum_{n=1}^{N_1}\mathcal{J}_n^{(1)}$. Using the triangle inequality, we have
\begin{multline}\label{eq:01}
    \Big(\mathcal{J}_n^{(1)}\Big)^{\frac 12}=\Big|\sum_{i=1}^d c_i(x_n^{(1)})\pxi(W_L\sigma(W_{L-1}\sigma(\dots\sigma(W_1x_n^{(1)}+b_1)\dots)+b_{L-1})+b_L)\\
    -y_n^{(1)}\Big|\leq\sum_{i=1}^d |c_i(x_n^{(1)})| |\mathcal{S}_{n,L}|+\Big|\sum_{i=1}^d c_i(x_n^{(1)})d_{L,i}[n]-y_n^{(1)}\Big|
\end{multline}
where $\mathcal{S}_{n,L}:=\pxi(W_L\sigma(W_{L-1}\sigma(\dots\sigma(W_1x_n^{(1)}+b_1)\dots)
+b_{L-1})+b_L)-d_{L,i}[n]$.

Using the chain rule and triangle inequality again, we obtain 
\begin{multline}\label{eq:02}
|\mathcal{S}_{n,L}|=\Big|W_L\Big(\sigma^{'}(W_{L-1}\sigma(\dots\sigma(W_1x_n^{(1)}+b_1)\dots)
+b_{L-1})\\ *\pxi(W_{L-1}\sigma(\dots
\sigma(W_1x_n^{(1)}+b_1)\dots)+b_{L-1})\Big)-d_{L,i}[n]\Big|\\
\leq\Big|W_L\Big(\sigma^{'}(W_{L-1}\sigma(\dots\sigma(W_1x_n^{(1)}+b_1)\dots)
+b_{L-1})*\pxi(W_{L-1}\sigma(\dots\\
\sigma(W_1x_n^{(1)}+b_1)\dots)+b_{L-1})\Big)-W_L\Big(\sigma^{'}(a_{L-1}^{(1)}[n])*d_{L-1,i}[n]\Big)\Big|\\
+\Big|W_L\Big(\sigma^{'}(a_{L-1}^{(1)}[n])*d_{L-1,i}[n]\Big)-d_{L,i}[n]\Big|\\
\leq \|W_L\|_\tF\|\mathcal{T}_{L-1,i}\|_\tF+\left|W_L(\sigma^{'}(a_{L-1}^{(1)}[n])*d_{L-1,i}[n])-d_{L,i}[n]\right|,
\end{multline}
where 
\begin{multline}
\mathcal{T}_{L-1,i}:=\sigma^{'}(W_{L-1}\sigma(\dots\sigma(W_1x_n^{(1)}+b_1)\dots)
+b_{L-1})\\
*\pxi\left(W_{L-1}\sigma(\dots
\sigma(W_1x_n^{(1)}+b_1)\dots)+b_{L-1}\right)-\sigma^{'}(a_{L-1}^{(1)}[n])*d_{L-1,i}[n].
\end{multline}
According to the Lipschitz continuity and the triangle inequality, it follows that
\begin{multline}
\|\mathcal{T}_{L-1,i}\|_\tF\leq\|\sigma^{'}(W_{L-1}\sigma(\dots\sigma(W_1x_n^{(1)}+b_1)\dots)
+b_{L-1})*\pxi(W_{L-1}\sigma(\dots\\
\sigma(W_1x_n^{(1)}+b_1)\dots)+b_{L-1}))-\sigma^{'}(W_{L-1}\sigma(\dots\sigma(W_1x_n^{(1)}+b_1)\dots)
+b_{L-1})*d_{L-1,i}[n]\|_\tF\\
+\|\sigma^{'}(W_{L-1}\sigma(\dots\sigma(W_1x_n^{(1)}+b_1)\dots)
+b_{L-1})*d_{L-1,i}[n]-\sigma^{'}(a_{L-1}^{(1)}[n])*d_{L-1,i}[n]\|_\tF\\
\leq\|\sigma^{'}(W_{L-1}\sigma(\dots\sigma(W_1x_n^{(1)}+b_1)\dots)
+b_{L-1})\|_\infty\|\mathcal{S}_{n,L-1}\|_\tF\\
+\|\sigma^{'}(W_{L-1}\sigma(\dots\sigma(W_1x_n^{(1)}+b_1)\dots)
+b_{L-1})-\sigma^{'}(a_{L-1}^{(1)}[n])\|_\tF\|d_{L-1,i}[n]\|_\tF\\
\leq B_{\sigma^{'}}\|\mathcal{S}_{n,L-1}\|_\tF+C_{\sigma^{'}}\|\mathcal{R}_{L-1}\|_\tF\|d_{L-1,i}[n]\|_\tF,
\end{multline}
where $\mathcal{R}_{L-1}:=W_{L-1}\sigma(\dots\sigma(W_1x_n^{(1)}+b_1)\dots)
+b_{L-1}-a_{L-1}^{(1)}[n]$. Therefore, we get
\begin{multline}\label{eq:03}
|\mathcal{S}_{n,L}|\leq B_{\sigma^{'}}\|W_L\|_\tF\|\mathcal{S}_{n,L-1}\|_\tF+C_{\sigma^{'}}\|W_L\|_\tF\|\mathcal{R}_{L-1}\|_\tF\|d_{L-1,i}[n]\|_\tF\\
+\left|W_L(\sigma^{'}(a_{L-1}^{(1)}[n])*d_{L-1,i}[n])-d_{L,i}[n]\right|.
\end{multline}
By similar argument as in \eqref{eq:02}-\eqref{eq:03}, we obtain
\begin{multline}\label{eq:04}
\|\mathcal{S}_{n,L-1}\|_\tF\leq B_{\sigma^{'}}\|W_{L-1}\|_\tF\|\mathcal{S}_{n,L-2}\|_\tF+C_{\sigma^{'}}\|W_{L-1}\|_\tF\|\mathcal{R}_{L-2}\|_\tF\|d_{L-2,i}[n]\|_\tF\\
+\|W_{L-1}(\sigma^{'}(a_{L-2}^{(1)}[n])*d_{L-2,i}[n])-d_{L-1,i}[n]\|_\tF.
\end{multline}
Combining \eqref{eq:03}-\eqref{eq:04} leads to
\begin{multline}\label{eq:05}
|\mathcal{S}_{n,L}|\leq B_{\sigma^{'}}^2\|W_L\|_\tF\|W_{L-1}\|_\tF\|\mathcal{S}_{n,L-2}\|_\tF\\
+ B_{\sigma^{'}}C_{\sigma^{'}}\|W_L\|_\tF\|W_{L-1}\|_\tF\|\mathcal{R}_{L-2}\|_\tF\|d_{L-2,i}[n]\|_\tF\\
+ B_{\sigma^{'}}\|W_L\|_\tF\|W_{L-1}(\sigma^{'}(a_{L-2}^{(1)}[n])*d_{L-2,i}[n])-d_{L-1,i}[n]\|_\tF\\
+C_{\sigma^{'}}\|W_L\|_\tF\|\mathcal{R}_{L-1}\|_\tF\|d_{L-1,i}[n]\|_\tF
+\left|W_L(\sigma^{'}(a_{L-1}^{(1)}[n])*d_{L-1,i}[n])-d_{L,i}[n]\right|.
\end{multline}
Doing the preceding steps recursively, we have
\begin{multline}\label{eq:06}
|\mathcal{S}_{n,L}|\leq \sum_{l=2}^{L-1} B_{\sigma^{'}}^{L-l}\Big(\prod_{k=l+1}^L\|W_k\|_\tF\Big)
\|W_l(\sigma^{'}(a_{l-1}^{(1)}[n])*d_{l-1,i}[n])-d_{l,i}[n]\|_\tF\\
+B_{\sigma^{'}}^{L-1}\Big(\prod_{k=2}^L\|W_k\|_\tF\Big)\|W_1(:,i)-d_{1,i}[n]\|_\tF
+\left|W_L(\sigma^{'}(a_{L-1}^{(1)}[n])*d_{L-1,i}[n])-d_{L,i}[n]\right|\\
+C_{\sigma^{'}}\sum_{l=1}^{L-1}B_{\sigma^{'}}^{L-l-1}\|d_{l,i}[n]\|_\tF\Big(\prod_{k=l+1}^L\|W_k\|_\tF\Big)\|\mathcal{R}_{l}\|_\tF.
\end{multline}
Next, using a similar argument as in the proof of Theorem \ref{thm01}, we have
\begin{multline}\label{eq:07}
    \|\mathcal{R}_{l}\|_\tF\leq \sum_{j=2}^{l-1}C_{\sigma}^{l-j}\Big(\prod_{k=j+1}^l\|W_k\|_\tF\Big)\|W_j\sigma(a^{(1)}_{j-1}[n])+b_j-a^{(1)}_j[n]\|_\tF\\
    +C_{\sigma}^{l-1}\Big(\prod_{k=2}^l\|W_k\|_\tF\Big)\|W_1x_n^{(1)}+b_1-a^{(1)}_1[n]\|_\tF+\|W_l\sigma(a_{l-1}^{(1)}[n])+b_l-a_l^{(1)}[n]\|_\tF,
\end{multline}
for $l=3,\ldots,L-1$, and
\begin{gather}\label{eq:08}
    \|\mathcal{R}_{2}\|_\tF\leq C_{\sigma}\|W_2\|_\tF\|W_1x_n^{(1)}+b_1-a^{(1)}_1[n]\|_\tF
    +\|W_2\sigma(a_1^{(1)}[n])+b_2-a_2^{(1)}[n]\|_\tF,
\end{gather}
and
\begin{equation}\label{eq:09}
    \begin{aligned}
    \|\mathcal{R}_{1}\|_\tF=\|W_1x_n^{(1)}+b_1-a^{(1)}_1[n]\|_\tF.
    \end{aligned}
\end{equation}
Combining \eqref{eq:06}-\eqref{eq:09} and bringing it into \eqref{eq:01} yields
\begin{multline}\label{eq:010}
\Big(\mathcal{J}_n^{(1)}\Big)^{\frac 12}\leq \sum_{i=1}^d B_c\Bigg[B_{\sigma^{'}}^{L-1}\Big(\prod_{k=2}^L\|W_k\|_\tF\Big)\|W_1(:,i)-d_{1,i}[n]\|_\tF\\
+\sum_{l=2}^{L-1}B_{\sigma^{'}}^{L-l}\Big(\prod_{k=l+1}^L\|W_k\|_\tF\Big)
\|W_l(\sigma^{'}(a_{l-1}^{(1)}[n])*d_{l-1,i}[n])-d_{l,i}[n]\|_\tF\\
+C_{\sigma^{'}}\sum_{l=2}^{L-1}B_{\sigma^{'}}^{L-l-1} \|d_{l,i}[n]\|_\tF \sum_{j=2}^{l}C_{\sigma}^{l-j}\Big(\prod_{k=j+1}^L\|W_k\|_\tF\Big)\|W_j\sigma(a^{(1)}_{j-1}[n])+b_j-a^{(1)}_j[n]\|_\tF\\
+C_{\sigma^{'}}\sum_{l=1}^{L-1}B_{\sigma^{'}}^{L-l-1} C_{\sigma}^{l-1}\|d_{l,i}[n]\|_\tF \Big(\prod_{k=2}^L\|W_k\|_\tF\Big)\|W_1x_n^{(1)}+b_1-a^{(1)}_1[n]\|_\tF\\
+\left|W_L(\sigma^{'}(a_{L-1}^{(1)}[n])*d_{L-1,i}[n])-d_{L,i}[n]\right|\Bigg]
+\Big|\sum_{i=1}^d c_i(x_n^{(1)})d_{L,i}[n]-y_n^{(1)}\Big|,
\end{multline}
which leads to
\begin{multline}\label{eq:011}
\mathcal{J}_n^{(1)}\leq C d L (L+1)\Bigg[\sum_{i=1}^d \Big[\omega_1\|W_1(:,i)-d_{1,i}[n]\|_\tF^2\\
+\sum_{l=2}^{L-1}\omega_l
\|W_l(\sigma^{'}(a_{l-1}^{(1)}[n])*d_{l-1,i}[n])-d_{l,i}[n]\|_\tF^2\\
+\sum_{l=2}^{L-1}\omega_l\Big(\sum_{j=l}^{L-1}\|d_{j,i}[n]\|_\tF^2\Big)\|W_l\sigma(a^{(1)}_{l-1}[n])+b_l-a^{(1)}_l[n]\|_\tF^2\\
+\omega_1\Big(\sum_{l=1}^{L-1}\|d_{l,i}[n]\|_\tF^2\Big)\|W_1x_n^{(1)}+b_1-a^{(1)}_1[n]\|_\tF^2\\
+\left|W_L(\sigma^{'}(a_{L-1}^{(1)}[n])*d_{L-1,i}[n])-d_{L,i}[n]\right|^2\Big]
+\Big|\sum_{i=1}^d c_i(x_n^{(1)})d_{L,i}[n]-y_n^{(1)}\Big|^2\Bigg],
\end{multline}
where $C=\max\Big\{1,B_c^2,B_c^2 B_{\sigma^{'}}^{2L-2},B_c^2 C_{\sigma^{'}}^2\max\{B_{\sigma^{'}}^{2L-4},C_{\sigma}^{2L-4}\},
B_c^2 C_{\sigma^{'}}^2\max\{B_{\sigma^{'}}^{2L-6},C_{\sigma}^{2L-6}\}\Big\}$. 
Summing up \eqref{eq:011} from $n=1$ to $N_1$ and dividing by $N_1$ yields 
\begin{equation}\label{eq:012}
    \mathcal{J}^{(1)}\leq C_1 d L (L+1)\JS^{(1)},
\end{equation}
where $C_1=C\max_{l=1,\dots,L}\left\{1,(\alpha_l^{(1)})^{-1},(\beta_l^{(1)})^{-1}\right\}$.

Moreover, using the same proof of Theorem \ref{thm01}, we can obtain
\begin{equation}\label{eq:013}
\begin{split}
&\mathcal{J}^{(2)}\leq C_2 L \JS^{(2)}\leq C_2 dL(L+1) \JS^{(2)},
\end{split}
\end{equation}
where $C_2=\max\{1,C_\sigma^{2L-2}\}\cdot \max_{l=1,\dots,L}\left\{1,(\beta_l^{(2)})^{-1}\right\}$.

Combing \eqref{eq:012} with \eqref{eq:013} yields \eqref{eq:00}.
\end{proof}

In Theorem \ref{thm03}, if the activation satisfies that $B_{\sigma'},C_\sigma,C_{\sigma'}\leq1$ (e.g., $\sigma(\cdot)=\sin(\cdot)$), the constant $\widetilde{C}$ is independent of $L$. Then it implies the relation $\mathcal{J}\leq O(dL^2)\JS$.

On the other hand, $\JS$ is bounded above by $\mathcal{J}$ in the following sense.

\begin{theorem}\label{thm04}
For all $\{W_l,b_l\}_{l=1}^L$, there exists $\{a_l^{(k)}\}_{l=1}^{L-1}$ and $\{d_{l,i}\}_{l=1}^{L-1}$ with $k=1,2$ and $i=1,\ldots,d$ such that
\begin{equation}
\JS\leq\mathcal{J}.
\end{equation}
\end{theorem}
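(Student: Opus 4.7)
The plan is to mirror the approach used in Theorem \ref{thm02}: for any fixed network parameters, choose the auxiliary variables so that every penalty term in $\JS$ vanishes, leaving only the two fitting terms, which then coincide with $N_1\mathcal{J}^{(1)}$ and $N_2\mathcal{J}^{(2)}$ respectively.

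Concretely, for $k=1,2$ I would set
\begin{equation*}
a_1^{(k)} = W_1 X^{(k)} + b_1\Bone^\top, \qquad a_l^{(k)} = W_l\sigma(a_{l-1}^{(k)}) + b_l\Bone^\top, \quad l=2,\dots,L-1,
\end{equation*}
and for $i=1,\dots,d$ set
\begin{equation*}
d_{1,i} = W_1(:,i)\Bone^\top, \qquad d_{l,i} = W_l\bigl(\sigma'(a_{l-1}^{(1)})*d_{l-1,i}\bigr), \quad l=2,\dots,L.
\end{equation*}
With these choices, every residual of the form $W_l\sigma(a_{l-1}^{(k)}) + b_l\Bone^\top - a_l^{(k)}$ and every residual $W_l(\sigma'(a_{l-1}^{(1)})*d_{l-1,i}) - d_{l,i}$ is identically zero, as is $W_1(:,i)\Bone^\top - d_{1,i}$, so all weighted penalty contributions in both $\JS^{(1)}$ and $\JS^{(2)}$ drop out regardless of the values of the weights $\omega_l$ and $\Omega_l$.

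The only part that requires a short verification is that the remaining two fitting terms recover $\mathcal{J}^{(1)}$ and $\mathcal{J}^{(2)}$. For the boundary term, the recursion for $a_l^{(2)}$ is exactly the FNN forward pass evaluated at $X^{(2)}$, so $W_L\sigma(a_{L-1}^{(2)}) + b_L\Bone^\top = \psi(X^{(2)};\theta)$, giving the $N_2\mathcal{J}^{(2)}$ term. For the PDE term I would invoke the chain rule: since $a_l^{(1)}$ equals the $l$-th hidden layer of $\psi$ evaluated at $X^{(1)}$, differentiating $a_l^{(1)}$ entrywise in $x_i$ produces precisely the recursion defining $d_{l,i}$, so inductively $d_{l,i} = \pxi a_l^{(1)}$ and in particular $d_{L,i} = \pxi \psi(X^{(1)};\theta)$. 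Therefore $\sum_{i=1}^d c_i(X^{(1)}) * d_{L,i} = \phi(X^{(1)};\theta)$ by definition \eqref{21}, so the PDE fitting term equals $N_1\mathcal{J}^{(1)}$.

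Adding the two contributions with the $\mu$ weighting yields $\JS = \mathcal{J}$, which is the claimed inequality with equality attained. There is no real obstacle here; the only minor subtlety is being careful that the chain-rule identification $d_{l,i} = \pxi a_l^{(1)}$ uses exactly the recursive formula for $d_{l,i}$ built into $\JS^{(1)}$, so the construction is self-consistent.
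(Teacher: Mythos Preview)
Your proposal is correct and follows exactly the same approach as the paper: choose the auxiliary variables to satisfy the forward-pass and chain-rule recursions so that every penalty term vanishes and $\JS=\mathcal{J}$ with equality. You in fact give more detail than the paper (which states the construction in one sentence) and, helpfully, you extend the definition of $d_{l,i}$ up to $l=L$, which is needed to handle the $d_{L,i}$ appearing in $\JS^{(1)}$ even though the theorem statement only lists $\{d_{l,i}\}_{l=1}^{L-1}$.
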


\begin{proof}
It suffices to let $a_1^{(k)} = W_1X^{(k)}+b_1\Bone^\top$, $a_l^{(k)}=W_l\sigma(a_{l-1}^{(k)})+b_l\Bone^\top$, $d_{1,i}=W_1(:,i)\Bone^\top$, and $d_{l,i}=W_l\left(\sigma'(a_{l-1}^{(1)})*d_{l-1,i}\right)$ for $l=2,\dots,L-1$, $k=1,2$, and $i=1,\ldots,d$. In this case the equality holds.
\end{proof}

\subsection{Implementation}\label{sec_implementation_SAPM_PINN}
Similarly to Section \ref{sec_implementation_SAPM_FNN}, we can develop an alternating direction algorithm to minimize $\JS$. Despite its complicated expression, $\JS$ is a quadratic form of each variable $W_l$, $b_l$ and $d_{l,i}$, which allows closed-form minimizers. The derivation is almost the same as that in Section \ref{sec_implementation_SAPM_FNN}, so here we directly give the formulation. For notational simplicity, we assume that the fixed weights $\alpha_{l}^{(1)},\beta_{l}^{(1)},\beta_l^{(2)}$ are all equal to one, and the formulation can easily be generalized to other values. For the sake of brevity, the notations $Z_l$, $\widetilde{A}_l^{(1)}$, $A_l^{(k)}$, $\lambda_l$, $D_l$, $\widetilde{P}_l^{(1)}$, $P_l^{(k)}$, $\theta_l$, $\xi_{n,l}$, $\delta_{n,l}$ appearing in this section are defined in Appendix \ref{Variables}. 

In every iteration, we update $W_l$ by solving the linear systems
\begin{equation}\label{eq:wl}
    \begin{split}
        W_l\begin{bmatrix}\frac{1}{\sqrt{N_1}}Z_l&\frac{1}{\sqrt{N_1}}\widetilde{A}_l^{(1)}&\frac{1}{\sqrt{N_2}}A_l^{(2)}&\sqrt{\lambda_l}I\end{bmatrix}
        =\begin{bmatrix}\frac{1}{\sqrt{N_1}}D_l&\frac{1}{\sqrt{N_1}}\widetilde{P}_l^{(1)} &\frac{1}{\sqrt{N_2}}P_l^{(2)}& O\end{bmatrix},
    \end{split}
\end{equation} 
for $l=2,\ldots, L$, and
\begin{equation}\label{eq:w1}
\begin{split}
    W_1\begin{bmatrix}\frac{1}{\sqrt{N_1}}(\Bone^\top\otimes I_d)&\frac{1}{\sqrt{N_1}}\widetilde{A}_1^{(1)}&\frac{1}{\sqrt{N_2}}A_1^{(2)}\end{bmatrix}
        =\begin{bmatrix}\frac{1}{\sqrt{N_1}}D_1&\frac{1}{\sqrt{N_1}}\widetilde{P}_1^{(1)} &\frac{1}{\sqrt{N_2}}P_1^{(2)}\end{bmatrix},
\end{split}
\end{equation}
for $W_1$. Here $\otimes$ represents the Kronecker product. We update $b_l$ by solving the linear systems
\begin{equation}\label{eq:bl}
b_l\begin{bmatrix}
    \frac{\sqrt{\theta_{l}}}{\sqrt{N_1}}&\frac{1}{\sqrt{N_2}}\Bone^\top
\end{bmatrix}
=\begin{bmatrix}
    \frac{\sqrt{\theta_{l}}}{\sqrt{N_1}}*\Big(a_l^{(1)}-W_lA_l^{(1)}\Big)&
    \frac{1}{\sqrt{N_2}}\Big(a_l^{(2)}-W_lA_l^{(2)}\Big)
\end{bmatrix}
\end{equation}
for $l=1,\ldots, L-1$, and
\begin{equation}\label{eq:bL}
b_L\Bone^\top=Y^{(2)}-W_LA_L^{(2)}.
\end{equation}
for $b_L$. We update $d_{l,i}~(i=1,\dots,d)$ by solving the linear systems
\begin{equation}\label{eq:dli}
    \begin{split}
        \begin{bmatrix}W_{l+1}\diag(\sigma^{'}(a_l^{(1)}[n]))\\ \|W_{l+1}\|_\tF I\\ \sqrt{\xi_{n,l}}I\end{bmatrix}d_{l,i}[n]=\begin{bmatrix}d_{l+1,i}[n]\\\|W_{l+1}\|_\tF D_{l,i}[n]\\ O\end{bmatrix},
    \end{split}
\end{equation}
for $l=1,\dots,L-1$, and
\begin{equation}\label{eq:dLi}
    \begin{split}
        \begin{bmatrix}d_{L,1}[n]~\cdots~d_{L,d}[n]\end{bmatrix}\begin{bmatrix}c_1(x_n^{(1)})& 1&\cdots&0\\
        \vdots& \vdots&\ddots&\vdots\\
        c_d(x_n^{(1)})&0 &\cdots&1
        \end{bmatrix}=\begin{bmatrix}y_n^{(1)} & D_{L,1}[n]&\cdots& D_{L,1}[n]\end{bmatrix},
    \end{split}
\end{equation}
for $d_{L,i}$ $(i=1,\dots,d)$.

Note that the loss function $\JS$ is not a quadratic form of $\{a_l^{(1)}\}$ and $\{a_l^{(2)}\}$, so there are no closed-form minimizers in general. We update $\{a_l^{(1)}\}$, and $\{a_l^{(2)}\}$ by gradient descent. The gradients are computed by
\begin{multline}
\nabla_{a_l^{(1)}[n]}\JS=2\Big[\delta_{n,l}\|W_{l+1}\|_\tF^2(a_l^{(1)}[n]-W_lA_l^{(1)}[n]-b_l)\\
+\delta_{n,l+1}\left(W_{l+1}^\top(W_{l+1}\sigma(a_l^{(1)}[n])-P_{l+1}^{(1)}[n])\right)*\sigma'(a_l^{(1)}[n])\\
+\sum_{i=1}^d\left(\left(W_{l+1}\diag(d_{l,i}[n])\right)^\top\left(W_{l+1}\diag(d_{l,i}[n])\sigma^{'}(a_l^{(1)}[n])-d_{l+1,i}[n]\right)
\right)*\sigma^{''}(a_l^{(1)}[n])\Big],
\end{multline}
and
\begin{equation}
\begin{split}
\nabla_{a_l^{(2)}}\JS=2\Big[\left(W_{l+1}^\top(W_{l+1}\sigma(a_l^{(2)})-P_{l+1}^{(2)})\right)*\sigma'(a_l^{(2)})\\
+\|W_{l+1}\|_\tF^2(a_l^{(2)}-W_lA_l^{(2)}-b_l\Bone^\top)\Big].
\end{split}
\end{equation}

To sum up, we present the algorithm that updates $W_l$, $b_l$, $a_l^{(1)}$, $a_l^{(2)}$, and $d_{l,i}$ alternatively for every $l$ from large to small (see Algorithm \ref{alg02}).

\begin{algorithm}
\DontPrintSemicolon
\KwIn{data $X^{(1)}$, $X^{(2)}$, $Y^{(1)}$, $Y^{(2)}$; number of iterations $N_k$; learning rate $\tau$}
\KwOut{a feasible solution $\{W_l,b_l,a_l^{(1)},a_l^{(2)},d_{l,i}\}$.}
\Begin{initialize $\{W_l,b_l,a_l^{(1)},a_l^{(2)},\{d_{l,i}\}_{i=1}^d\}_{l=1}^L$\;
\For{$k = 0,\cdots,N_k-1$}{
Solve \eqref{eq:wl} for $W_L$\;
Solve \eqref{eq:bL} for $b_L$\;
\For{$n = 1,\cdots,N_1$}{
Solve \eqref{eq:dLi} for $d_{L,i}[n]$ $(i=1,\dots,d)$
}
\For{$l = L-1,\cdots,2$}{
Solve \eqref{eq:wl} for $W_l$. Solve \eqref{eq:bl} for $b_l$\;
\For{$n = 1,\cdots,N_1$}{
\For{$i = 1,\cdots,d$}{
Solve \eqref{eq:dli} for $d_{l,i}[n]$
}
}
\For{$n = 1,\cdots,N_1$}{
$a_l^{(1)}[n]\leftarrow a_l^{(1)}[n]-\tau\nabla_{a_l^{(1)}[n]}\JS$
}
$a_l^{(2)}\leftarrow a_l^{(2)}-\tau\nabla_{a_l^{(2)}}\JS$
}
Solve \eqref{eq:w1} for $W_1$\;
Solve \eqref{eq:bl} for $b_1$\;
\For{$n = 1,\cdots,N_1$}{
\For{$i = 1,\cdots,d$}{
Solve \eqref{eq:dli} for $d_{1,i}[n]$
}
}
\For{$n = 1,\cdots,N_1$}{
$a_1^{(1)}[n]\leftarrow a_1^{(1)}[n]-\tau\nabla_{a_1^{(1)}[n]}\JS$
}
$a_1^{(2)}\leftarrow a_1^{(2)}-\tau\nabla_{a_1^{(2)}}\JS$
}
return $\{W_l,b_l,a_l^{(1)},a_l^{(2)},\{d_{l,i}\}_{i=1}^d\}_{l=1}^L$
}
\caption{Solve $\min_{W_l,b_l,a_l^{(1)},a_l^{(2)},d_{l,i}} \JS$\label{alg02}}
\end{algorithm}

\section{Numerical Experiments}\label{sec_experiments}
In this section, we implement the abovementioned models to solve learning problems and transport equations. These models are summarized in Table \ref{Tab_label}

\begin{table}[h!]\small 
\centering
\begin{tabular}{llcc}
  \toprule
Models & \makecell{Formulation} & Loss & Abbreviation \\\hline
LS-FNN & Eq. \eqref{LS_FNN}& $\mathcal{L}$ & \multirow{2}{*}{LS} \\
LS-PINN & Eq. \eqref{LS_PINN}& $\mathcal{J}$ & \\
PM-FNN & Eq. \eqref{PM_FNN} & $\LP$ & \multirow{2}{*}{PM}\\
PM-PINN & Eq. \eqref{PM_PINN} & $\JP$ & \\
SAPM-FNN & Eq. \eqref{SAPM_FNN} & $\LS$ & \multirow{2}{*}{SAPM}\\
SAPM-PINN & Eq. \eqref{SAPM_PINN} & $\JS$ & \\\bottomrule
\end{tabular}
\caption{\em Models solved in the experiments.}\label{Tab_label}
\end{table}

Details about the experimental settings are listed below. 
\begin{itemize}
  \item {\em Environment for LS.}
  As conventional deep learning models, LS-FNN and LS-PINN are implemented in Python with PyTorch library. The optimization is solved by the vanilla gradient descent method with fine-tuned decaying learning rates. 
  \item {\em Environment for PM and SAPM.}
  SAPM-FNN and SAPM-PINN are solved by Algorithm \ref{alg01} in Section \ref{sec_implementation_SAPM_FNN} and by Algorithm \ref{alg02} in Section \ref{sec_implementation_SAPM_PINN}, respectively. They are implemented in Matlab, where the subroutines {\tt mldivide} (i.e., \textbackslash ) or {\tt mrdivide} (i.e., /) are called to solve the linear least squares systems. For PM-FNN and PM-PINN, we design alternating direction algorithms and implement them in Matlab in a similar way. 
  
  \item {\em Initialization of variables.}
  The network parameters $\{W_l, b_l\}$ and auxiliary variables $\{a_l,a_l^{(1)},a_l^{(2)},d_{l,i}\}$ are randomly initialized with uniform distribution by
  \begin{equation}
  W_l, b_l\sim U(-M^{-1/2},M^{-1/2}),\quad a_l, a_l^{(1)}, a_l^{(2)}, d_{l,i}\sim U(-1,1).
  \end{equation}
  \item {\em Datasets and error evaluation.}
  The training feature vectors $\{x_n\}$ consist of Halton quasirandom points in the problem domain $\Omega$. Then the training dataset is formed by $\{(x_n,y_n)\}$ with $y_n=f(x_n)$.
  
  For the learning problems with FNNs, we compute the $\ell^2$ training error by
  \begin{equation}
  \Eltwo=\left(\frac{\sum_{n=1}^{N}|\phi(x_n;\theta)-y_n|^2}{\sum_{n=1}^{N}|y_n|^2}\right)^\frac{1}{2}.
  \end{equation}
  And we also generate an extra set of Halton points $\{x'_n\}$ in $\Omega$, and let $\{(x'_n,y'_n)\}$ be the testing set, where $y'_n:=f(x'_n)$. Note $\{x'_n\}$ is not overlapped with $\{x_n\}$. We compute the $\ell^2$ testing error $\hEltwo$ in a similar way to evaluate the generalization performance.
  
  For the PDE problems with PINNs, we compute the solution error, i.e. the $\ell^2$ error between the approximate network $\psi$ and the true PDE solution $u$. We generate a testing set of Halton points $\{x'_n\}$, and compute the solution error 
  \begin{equation}
  \hEltwo=\left(\frac{\sum_{n=1}^{N}|\psi(x'_n;\theta)-u(x'_n)|^2}{\sum_{n=1}^{N}|u(x'_n)|^2}\right)^\frac{1}{2}.
  \end{equation}

  \item {\em Randomness.}
  To alleviate the effect of randomness from initialization, we repeat every individual test for 10 times with different seeds. This is equivalent to taking different initial guesses for the optimization. On one hand, we investigate the stability of the models and algorithms by observing the results of all seeds. On the other hand, we focus on the seed that obtains the smallest final loss as the ``best" seed, which shows the best performance under random initialization. 
\end{itemize}

\subsection{Learning Problems with FNNs}
In the first part of the experiments, we learn a function $f$ using FNNs. To test the performance of various network sizes, we implement the algorithms with five depth-width combinations $(L,M)=(6,10)$, $(8,10)$, $(10,10)$, $(6,20)$ and $(6,50)$. The activation function is set as the ReLU activation. All weights $\beta_l$ are set to be 1.
\subsubsection{One-dimensional Case}
In this first example, we set $f(x)=\sin(x^2)$ in the 1-D domain $\Omega=[-1,1]$. The sizes of training and testing set are $10^2$ and $10^3$, respectively. For PM/SAPM, we decrease the actual loss $\LP$/$\LS$ in $5\times10^4$ iterations, obtaining a feasible solution $\{W_l,b_l,a_l\}$. Meanwhile, we also evaluate the mean squared loss $\mathcal{L}$ of the computed $\{W_l,b_l\}$, checking the consistency between $\mathcal{L}$ and $\LP$/$\LS$. For comparison, LS is also implemented with $5\times10^4$ iterations of gradient descent, providing a reference result.

First, let us investigate the learning dynamics of these models. For tests of all seeds, the curves of actual losses versus iterations are plotted in Figure \ref{Fig_Case1_Allloss_iteration}. It is observed that several loss curves of LS get stuck quickly after the start and then remain unchanged over time. This implies that gradient descent on LS will possibly stagnate in bad regions due to vanishing gradient issues and the high non-convexity of the loss landscape. In comparison, PM and SAPM always have decreasing losses during the optimization. Moreover, we can observe that the curves of SAPM are more clustered with lower variance, implying SAPM is more robust than PM over random initialization.

To further quantify the stability over randomness, we report the values of the initial loss before optimization and the final loss after optimization in Table \ref{Tab_Case1_allloss} for all seeds. We say a seed fails in the optimization if the final loss is not below 10\% of the initial loss. It is clear that 9 of 10 seeds fails for $(L,M)=(10,10)$ and 7 of 10 seeds fails for $(L,M)=(6,50)$ in LS. But for PM and SAPM, all seeds obtain significant loss reduction. This reflects the advantage of the alternating direction algorithm over the vanilla gradient descent in the success of random initialization.

Furthermore, for PM and SAPM, we show the curves of their actual losses and the mean squared loss $\mathcal{L}$ versus iterations from the best seed in Figure \ref{Fig_Case1_Bestloss_iteration}. It is seen that the two loss curves of PM are not decreasing synchronously. The optimizer decreases the actual loss curve, but the corresponding mean squared loss $\mathcal{L}$ blows up to a high order of magnitude. Comparatively, the two loss curves of SAPM are decreasing in parallel, consistent with Theorem \ref{thm01} and \ref{thm02}. More precisely, Theorem \ref{thm01} indicates that $\mathcal{L}$ is bounded above by $C_{B,\beta}L\LS$. In this example, $C_{B,\beta}=1$, so $\mathcal{L}$ should be no larger than $L$ times of $\LS$ in theory. Our numerical results show that the relation $\mathcal{L}\leq L\LS$ is always satisfied in SAPM for all seeds. 

Lastly, for PM and SAPM, we report the final actual losses $\LP$/$\LS$ and mean squared losses $\mathcal{L}$ as well as the final errors $\Eltwo$ and $\hEltwo$ of the best seeds in Table \ref{Tab_Case1_bestresult}. We also report the results of LS, whose actual loss is exactly the mean squared losses $\mathcal{L}$. For SAPM, thanks to the consistency between $\mathcal{L}$ and $\LS$, $\mathcal{L}$ is decreased together with $\LS$ by the optimizer; the training error $\Eltwo$ is also decreased together with $\mathcal{L}$, noting that $\Eltwo=C\mathcal{L}^{1/2}$ for some constant $C$ independent of the variables. The final errors of SAPM have lower or equal orders of magnitude than LS (recall that the reported final errors of LS are chosen from the best seeds, yet most seeds fail in the optimization and obtain very large errors). Comparatively, PM outputs large $\mathcal{L}$ and errors, though its actual loss can be reduced to a satisfactorily low level by the optimizer.

We remark that for PM and SAPM, bad regularization may appear in local regions. For example, in the case $(L,M)=(6,10)$, the learner $\phi(x,\theta)$ obtained from the sixth seed is shown in Figure \ref{Fig_Case1_generalization}, as well as the true target function $f(x)$. It is observed that $\phi$ has an erroneous spike in a small region near $x=0.92$, where no training points are located. The local bad regularization possibly comes from the special algorithms that update variables alternatively. It is different from the usual case in deep learning that all variables are updated simultaneously via gradient descent, which guarantees the regularization of the learner network implicitly \cite{Neyshabur2017,Lei2018,Cao2021}.

\begin{figure}
\centering
\includegraphics[scale=0.7]{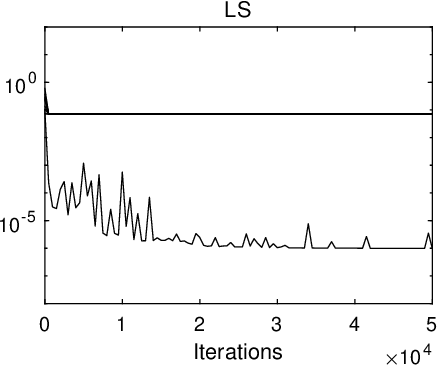}\hspace{30pt}
\includegraphics[scale=0.7]{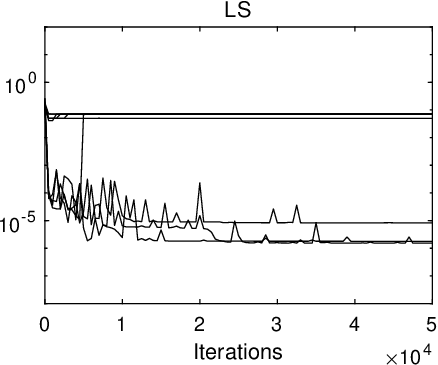}\\\vspace{10pt}
\includegraphics[scale=0.7]{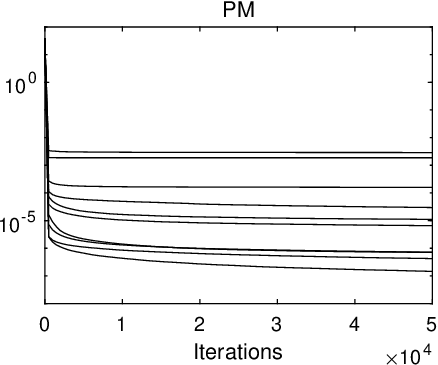}\hspace{30pt}
\includegraphics[scale=0.7]{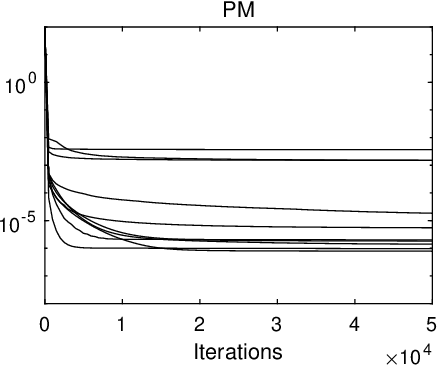}\\
\subfloat[$L=10,M=10$]{
\includegraphics[scale=0.7]{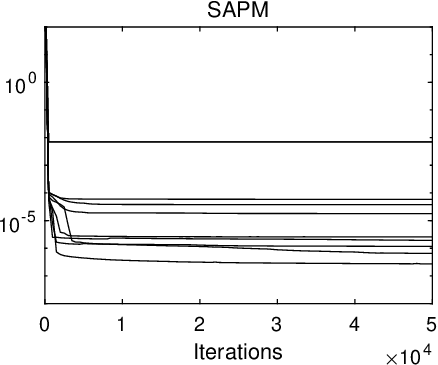}}\hspace{30pt}
\subfloat[$L=6,M=50$]{
\includegraphics[scale=0.7]{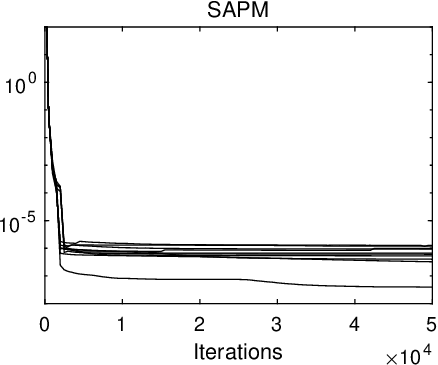}}
\caption{\em The loss versus iterations in learning $f(x)=\sin(x^2)$. (Results are from 10 random seeds.)}
\label{Fig_Case1_Allloss_iteration}
\end{figure}

\begin{table}
\fontsize{7}{9.6}\selectfont
\centering
\subfloat[$L=10,M=10$]{
\begin{tabular}{llll}
  \toprule
Seed  & \multicolumn{1}{c}{LS} & \multicolumn{1}{c}{PM} & \multicolumn{1}{c}{SAPM} \\\hline
1& 6.25e-01 $\rightarrow$ 7.05e-02* & 3.75e+01 $\rightarrow$ 1.45e-07 & 4.22e+04 $\rightarrow$ 6.49e-07 \\
2& 1.09e-01 $\rightarrow$ 7.05e-02* & 3.87e+01 $\rightarrow$ 2.89e-03 & 2.37e+04 $\rightarrow$ 1.92e-06 \\
3& 3.92e-01 $\rightarrow$ 7.05e-02* & 3.76e+01 $\rightarrow$ 7.04e-07 & 1.44e+04 $\rightarrow$ 5.86e-05 \\
4& 2.38e-01 $\rightarrow$ 7.05e-02* & 3.90e+01 $\rightarrow$ 1.11e-05 & 3.28e+04 $\rightarrow$ 3.77e-05 \\
5& 1.41e-01 $\rightarrow$ 7.05e-02* & 3.93e+01 $\rightarrow$ 2.96e-05 & 2.68e+04 $\rightarrow$ 7.02e-03 \\
6& 2.64e-01 $\rightarrow$ 7.05e-02* & 3.75e+01 $\rightarrow$ 1.87e-03 & 3.05e+04 $\rightarrow$ 1.17e-06 \\
7& 2.99e-01 $\rightarrow$ 7.05e-02* & 3.76e+01 $\rightarrow$ 6.55e-06 & 2.73e+04 $\rightarrow$ 1.79e-05 \\
8& 7.49e-02 $\rightarrow$ 7.05e-02* & 3.77e+01 $\rightarrow$ 1.59e-04 & 2.73e+04 $\rightarrow$ 7.01e-03 \\
9& 1.04e-01 $\rightarrow$ \textbf{9.80e-07} & 3.72e+01 $\rightarrow$ 7.22e-07 & 1.95e+04 $\rightarrow$ 2.72e-07 \\
10& 7.28e-02 $\rightarrow$ 7.05e-02* & 3.71e+01 $\rightarrow$ 4.22e-07 & 1.01e+04 $\rightarrow$ 2.57e-06 \\
\bottomrule
\end{tabular}}\\
\subfloat[$L=6,M=50$]{
\begin{tabular}{llll}
  \toprule
Seed  & \multicolumn{1}{c}{LS} & \multicolumn{1}{c}{PM} & \multicolumn{1}{c}{SAPM} \\\hline
1& 2.68e-01 $\rightarrow$ 1.76e-06 & 9.74e+01 $\rightarrow$ 2.00e-06 & 6.09e+05 $\rightarrow$ 3.24e-07 \\
2& 1.31e-01 $\rightarrow$ 7.05e-02* & 9.65e+01 $\rightarrow$ 1.82e-06 & 4.64e+05 $\rightarrow$ 1.27e-06 \\
3& 2.50e-01 $\rightarrow$ 8.32e-06 & 9.66e+01 $\rightarrow$ 1.50e-03 & 5.17e+05 $\rightarrow$ 4.00e-07 \\
4& 1.81e-01 $\rightarrow$ 7.05e-02* & 9.66e+01 $\rightarrow$ 9.64e-07 & 5.66e+05 $\rightarrow$ 6.61e-07 \\
5& 1.21e-01 $\rightarrow$ 7.05e-02* & 9.70e+01 $\rightarrow$ 1.51e-03 & 5.40e+05 $\rightarrow$ 9.32e-07 \\
6& 1.54e-01 $\rightarrow$ 4.95e-02* & 9.56e+01 $\rightarrow$ 5.49e-06 & 5.08e+05 $\rightarrow$ 1.20e-06 \\
7& 2.00e-01 $\rightarrow$ \textbf{1.52e-06} & 9.79e+01 $\rightarrow$ 3.67e-03 & 5.69e+05 $\rightarrow$ 6.06e-07 \\
8& 1.07e-01 $\rightarrow$ 7.05e-02* & 9.47e+01 $\rightarrow$ 1.84e-05 & 4.48e+05 $\rightarrow$ 3.94e-08 \\
9& 1.47e-01 $\rightarrow$ 7.05e-02* & 9.63e+01 $\rightarrow$ 7.79e-07 & 4.56e+05 $\rightarrow$ 5.31e-07 \\
10& 9.71e-02 $\rightarrow$ 7.05e-02* & 9.85e+01 $\rightarrow$ 1.40e-06 & 4.77e+05 $\rightarrow$ 9.76e-07 \\
\bottomrule
\end{tabular}}
\caption{\em Initial loss $\rightarrow$ final loss of the model optimization from Seed 1 to Seed 10 in learning $f(x)=sin(x^2)$. (Results with asterisk mean failed optimization that the final loss is no less than 10\% if the initial loss.)}
\label{Tab_Case1_allloss}
\end{table}

\begin{figure}
\centering
\subfloat[$L=10,M=10$]{
\includegraphics[scale=0.7]{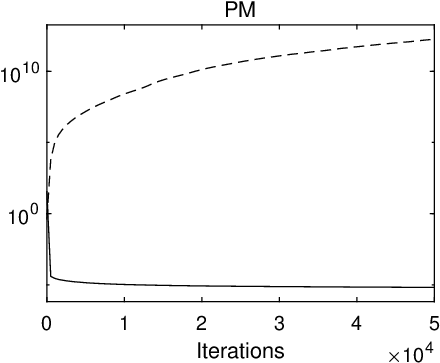}\hspace{30pt}
\includegraphics[scale=0.7]{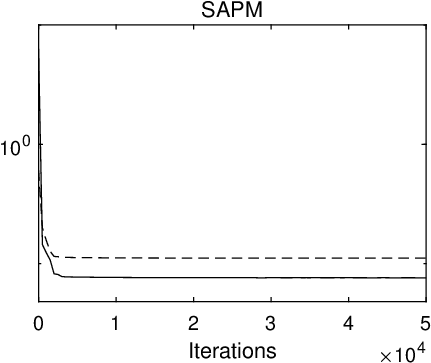}}\\
\subfloat[$L=6,M=50$]{
\includegraphics[scale=0.7]{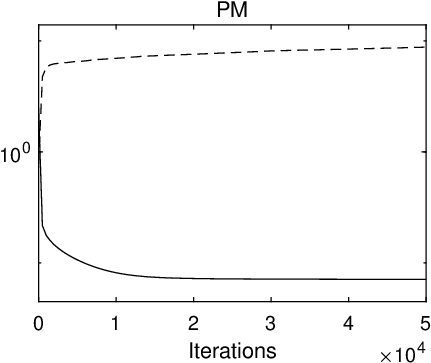}\hspace{30pt}
\includegraphics[scale=0.7]{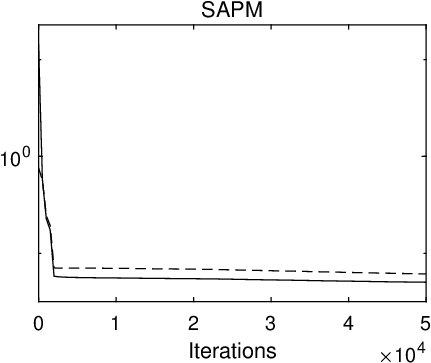}}
\caption{\em The actual loss $\LP$/$\LS$ (solid curve) and the corresponding mean squared loss $\mathcal{L}$ (dashed curve) versus iterations in learning $f(x)=\sin(x^2)$. (Results are from the best seed.)}
\label{Fig_Case1_Bestloss_iteration}
\end{figure}

\begin{table}
\fontsize{7}{9.6}\selectfont
\centering
\begin{tabular}{clcccc}
  \toprule
$(L,M)$ & Models & Actual loss & $\mathcal{L}$ & $\Eltwo$ & $\hEltwo$ \\\hline
\multirow{3}{*}{$(6,10)$} & LS & \multicolumn{2}{c}{9.76e-06} & 7.64e-03 & 7.89e-03 \\
& PM & 4.10e-07 & 1.03e+02 & 2.48e+01 & 2.44e+01 \\
& SAPM & 4.41e-07 & 1.67e-06 & 3.16e-03 & 3.28e-03 \\\hline
\multirow{3}{*}{$(8,10)$} & LS & \multicolumn{2}{c}{2.04e-06} & 3.49e-03 & 3.77e-03 \\
& PM & 1.81e-06 & 4.02e+02 & 4.90e+01 & 4.12e+01 \\
& SAPM & 8.74e-07 & 5.58e-06 & 5.78e-03 & 7.31e-03 \\\hline
\multirow{3}{*}{$(10,10)$} & LS & \multicolumn{2}{c}{9.80e-07} & 2.42e-03 & 3.19e-03 \\
& PM & 1.45e-07 & 5.97e-03 & 1.89e-01 & 1.88e-01 \\
& SAPM & 2.72e-07 & 1.84e-06 & 3.32e-03 & 3.42e-03 \\\hline
\multirow{3}{*}{$(6,20)$} & LS & \multicolumn{2}{c}{3.43e-07} & 1.43e-03 & 2.74e-03 \\
& PM & 5.88e-06 & 6.91e+05 & 2.03e+03 & 1.93e+03 \\
& SAPM & 5.62e-07 & 1.60e-06 & 3.09e-03 & 4.86e-03 \\\hline
\multirow{3}{*}{$(6,50)$} & LS & \multicolumn{2}{c}{1.52e-06} & 3.01e-03 & 3.18e-03 \\
& PM & 7.79e-07 & 9.13e-01 & 2.34e+00 & 6.10e+01 \\
& SAPM & 3.94e-08 & 7.99e-08 & 6.91e-04 & 9.94e-03 \\
\bottomrule
\end{tabular}
\captionsetup{width=\textwidth}
\caption{\em The final actual losses, associated mean squared losses $\mathcal{L}$, training errors $\Eltwo$ and testing errors $\hEltwo$ in learning $f(x)=sin(x^2)$. (For each $(L,M)$ combination, the result is from the best seed. Note that most seeds of LS can not achieve these good results.)}
\label{Tab_Case1_bestresult}
\end{table}

\begin{figure}
\centering
\includegraphics[scale=0.7]{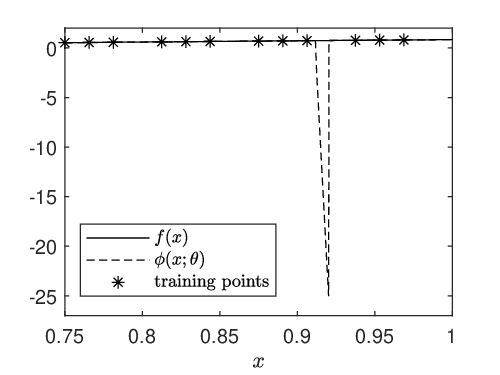}
\caption{\em Partial plots of the learner $\phi(x;\theta)$, target function $f(x)=\sin(x^2)$ and training points $\{(x_n,y_n)\}$ in a test of SAPM.}
\label{Fig_Case1_generalization}
\end{figure}

\subsubsection{High-dimensional case}
In this second example, we set $f(x)=1/(2\sqrt{d}+\sum_{i=1}^dx_i)$ in the unit ball $\Omega=\{x\in\mathbb{R}^d:\|x\|_2\leq1\}$. We set the dimension $d=10$. To address the high dimension, we enlarge the size of the training dataset to $10^4$. The size combinations $(L,M)=(6,10),(8,10),(10,10)$ are tested.

Phenomena similar to the preceding case are observed. To be precise, we report the initial and final losses of the models in Table \ref{Tab_Case2_allloss}. It is seen that most seeds of LS can only slightly reduce the loss function: for $L=6$, only the tenth seed reduces the loss to $O(10^{-8})$, and the others are $O(10^{-4})$; For $L=8,10$, all 10 seeds obtain poor optimization with final losses $O(10^{-4})$. This means it is difficult to optimize LS by gradient descent. Comparatively, PM and SAPM have good optimization results: their losses are always reduced to a large extent, e.g., from $10^{1}$ to $10^{-8}$. Therefore, it implies that the optimization of PM and SAPM can be solved more effectively then LS using the alternating direction algorithms.

Moreover, SAPM has simultaneous decreases of $\mathcal{L}$ and $\LS$, but PM does not; therefore, SAPM achieves more accurate learning than PM. We report the final results from the best seed in Table \ref{Tab_Case2_bestresult}. SAPM can always obtain $O(10^{-3})$ errors, but PM performs much worse for $L=6,8$ with $O(10^{-1})$ errors and completely fails for $L=10$ with $O(10^3)$ errors. Although LS obtains smaller error $O(10^{-4})$ when $L=6$, this result is from the unique seed that does not fail (the tenth seed in Table \ref{Tab_Case2_allloss}); all the other seeds have low-accuracy results, just as the case $L=8,10$ where LS only gets $O(10^{-1})$ errors. 

\begin{table}[h!]
\fontsize{7}{9.6}\selectfont
\centering
\subfloat[$L=6,M=10$]{
\begin{tabular}{llll}
  \toprule
Seed  & \multicolumn{1}{c}{LS} & \multicolumn{1}{c}{PM} & \multicolumn{1}{c}{SAPM} \\\hline
1& 2.71e-01 $\rightarrow$ 6.05e-04 & 2.04e+01 $\rightarrow$ 1.46e-08 & 2.26e+02 $\rightarrow$ 1.08e-06 \\
2& 6.75e-03 $\rightarrow$ 6.10e-04 & 2.07e+01 $\rightarrow$ 9.39e-07 & 2.97e+02 $\rightarrow$ 1.96e-07 \\
3& 1.79e-01 $\rightarrow$ 6.30e-04 & 2.09e+01 $\rightarrow$ 3.21e-08 & 1.70e+02 $\rightarrow$ 5.01e-06 \\
4& 2.02e-02 $\rightarrow$ 6.08e-04 & 2.01e+01 $\rightarrow$ 1.11e-07 & 2.07e+02 $\rightarrow$ 2.63e-06 \\
5& 2.16e-03 $\rightarrow$ 6.02e-04 & 2.11e+01 $\rightarrow$ 4.62e-09 & 2.11e+02 $\rightarrow$ 7.05e-06 \\
6& 1.27e-01 $\rightarrow$ 6.05e-04 & 2.13e+01 $\rightarrow$ 2.90e-07 & 2.93e+02 $\rightarrow$ 1.24e-05 \\
7& 5.92e-02 $\rightarrow$ 5.94e-04 & 2.16e+01 $\rightarrow$ 1.70e-08 & 2.34e+02 $\rightarrow$ 1.31e-06 \\
8& 1.53e-03 $\rightarrow$ 6.10e-04 & 2.08e+01 $\rightarrow$ 4.89e-08 & 3.03e+02 $\rightarrow$ 1.04e-05 \\
9& 3.46e-03 $\rightarrow$ 6.19e-04 & 2.12e+01 $\rightarrow$ 1.60e-07 & 3.32e+02 $\rightarrow$ 2.60e-06 \\
10& 2.82e-02 $\rightarrow$ \textbf{1.62e-08} & 2.13e+01 $\rightarrow$ 9.62e-09 & 2.52e+02 $\rightarrow$ 8.92e-08 \\
\bottomrule
\end{tabular}}\\
\subfloat[$L=8,M=10$]{
\begin{tabular}{llll}
  \toprule
Seed  & \multicolumn{1}{c}{LS} & \multicolumn{1}{c}{PM} & \multicolumn{1}{c}{SAPM} \\\hline
1& 7.75e-04 $\rightarrow$ 6.04e-04 & 2.93e+01 $\rightarrow$ 3.26e-08 & 3.03e+03 $\rightarrow$ 6.64e-06 \\
2& 1.20e-01 $\rightarrow$ 6.09e-04 & 3.00e+01 $\rightarrow$ 7.13e-07 & 3.44e+03 $\rightarrow$ 6.39e-07 \\
3& 4.66e-03 $\rightarrow$ 5.98e-04 & 2.91e+01 $\rightarrow$ 2.63e-09 & 2.09e+03 $\rightarrow$ 4.49e-06 \\
4& 1.50e-03 $\rightarrow$ \textbf{5.90e-04} & 2.94e+01 $\rightarrow$ 1.16e-08 & 2.47e+03 $\rightarrow$ 3.85e-07 \\
5& 2.13e-01 $\rightarrow$ 6.08e-04 & 2.99e+01 $\rightarrow$ 1.46e-08 & 3.05e+03 $\rightarrow$ 3.87e-07 \\
6& 1.77e-01 $\rightarrow$ 6.07e-04 & 2.98e+01 $\rightarrow$ 8.36e-08 & 3.42e+03 $\rightarrow$ 3.70e-06 \\
7& 8.22e-03 $\rightarrow$ 6.12e-04 & 2.93e+01 $\rightarrow$ 1.51e-07 & 2.09e+03 $\rightarrow$ 9.06e-08 \\
8& 2.40e-01 $\rightarrow$ 5.94e-04 & 2.87e+01 $\rightarrow$ 6.58e-08 & 3.09e+03 $\rightarrow$ 1.07e-06 \\
9& 1.06e-01 $\rightarrow$ 6.03e-04 & 2.99e+01 $\rightarrow$ 9.46e-07 & 4.09e+03 $\rightarrow$ 8.75e-07 \\
10& 5.47e-03 $\rightarrow$ 6.01e-04 & 2.97e+01 $\rightarrow$ 3.89e-09 & 2.59e+03 $\rightarrow$ 1.16e-07 \\
\bottomrule
\end{tabular}}\\
\subfloat[$L=10,M=10$]{
\begin{tabular}{llll}
  \toprule
Seed  & \multicolumn{1}{c}{LS} & \multicolumn{1}{c}{PM} & \multicolumn{1}{c}{SAPM} \\\hline
1& 1.33e-02 $\rightarrow$ 6.06e-04 & 3.79e+01 $\rightarrow$ 2.35e-08 & 3.17e+04 $\rightarrow$ 9.41e-07 \\
2& 4.39e-02 $\rightarrow$ 6.05e-04 & 3.88e+01 $\rightarrow$ 6.13e-08 & 4.32e+04 $\rightarrow$ 5.27e-07 \\
3& 4.54e-03 $\rightarrow$ \textbf{5.95e-04} & 3.68e+01 $\rightarrow$ 2.52e-08 & 2.00e+04 $\rightarrow$ 1.03e-06 \\
4& 1.45e-01 $\rightarrow$ 6.07e-04 & 3.77e+01 $\rightarrow$ 6.57e-08 & 2.97e+04 $\rightarrow$ 5.97e-07 \\
5& 2.36e-02 $\rightarrow$ 6.17e-04 & 3.82e+01 $\rightarrow$ 1.10e-08 & 4.72e+04 $\rightarrow$ 1.45e-06 \\
6& 1.77e-01 $\rightarrow$ 6.12e-04 & 3.91e+01 $\rightarrow$ 6.99e-08 & 3.21e+04 $\rightarrow$ 2.79e-07 \\
7& 1.72e-02 $\rightarrow$ 5.95e-04 & 3.73e+01 $\rightarrow$ 1.61e-08 & 2.43e+04 $\rightarrow$ 2.65e-07 \\
8& 1.28e-02 $\rightarrow$ 6.17e-04 & 3.66e+01 $\rightarrow$ 3.23e-08 & 2.92e+04 $\rightarrow$ 6.18e-07 \\
9& 7.47e-02 $\rightarrow$ 6.19e-04 & 3.76e+01 $\rightarrow$ 1.51e-07 & 3.76e+04 $\rightarrow$ 3.40e-07 \\
10& 1.18e-02 $\rightarrow$ 5.96e-04 & 3.77e+01 $\rightarrow$ 5.33e-08 & 3.66e+04 $\rightarrow$ 2.85e-08 \\
\bottomrule
\end{tabular}}
\caption{\em Initial loss $\rightarrow$ final loss of the model optimization from Seed 1 to Seed 10 in learning the 10-dimensional function $f(x)=1/(2\sqrt{d}+\sum_{i=1}^dx_i)$.}
\label{Tab_Case2_allloss}
\end{table}

\begin{table}
\fontsize{7}{9.6}\selectfont
\centering
\begin{tabular}{clcccc}
  \toprule
$(L,M)$ & Models & Actual loss & $\mathcal{L}$ & $\Eltwo$ & $\hEltwo$ \\\hline
\multirow{3}{*}{$(6,10)$} & LS & \multicolumn{2}{c}{1.62e-08} & 7.78e-04 & 9.37e-04 \\
& PM & 4.62e-09 & 1.10e-03 & 2.03e-01 & 2.06e-01 \\
& SAPM & 8.92e-08 & 3.84e-07 & 3.79e-03 & 4.01e-03 \\\hline
\multirow{3}{*}{$(8,10)$} & LS & \multicolumn{2}{c}{5.90e-04} & 1.49e-01 & 1.51e-01 \\
& PM & 2.63e-09 & 6.50e-04 & 1.56e-01 & 1.57e-01 \\
& SAPM & 9.06e-08 & 4.61e-07 & 4.15e-03 & 4.23e-03 \\\hline
\multirow{3}{*}{$(10,10)$} & LS & \multicolumn{2}{c}{5.95e-04} & 1.49e-01 & 1.51e-01 \\
& PM & 1.10e-08 & 9.93e+04 & 1.93e+03 & 1.88e+03 \\
& SAPM & 2.85e-08 & 1.71e-07 & 2.53e-03 & 2.12e-03 \\
\bottomrule
\end{tabular}
\captionsetup{width=\textwidth}
\caption{\em The final actual losses, associated mean squared losses $\mathcal{L}$, training errors $\Eltwo$ and testing errors $\hEltwo$ in learning the 10-dimensional function $f(x)=1/(2\sqrt{d}+\sum_{i=1}^dx_i)$. (Results are from the best seed.)}
\label{Tab_Case2_bestresult}
\end{table} 

\subsection{Transport Equations}
In the second part, we solve the transport equations 
\begin{gather}
\frac{\partial u}{\partial t}(t,x)-v(t,x)\cdot\nabla_{x}u(t,x)=f(t,x),~~ \text{for}~ t\in (0,T],~x\in\Omega,\nonumber\\
u(t,x)=g(t,x),~~ \text{for}~ t\in (0,T],~x\in\Gamma,\label{eq:Trans.Eq}\\
u(0,x)=u_0(x),~~ \text{for}~ x\in\Omega,\nonumber
\end{gather}
using LS-PINN, PM-PINN and SAPM-PINN. Various network sizes are tested. We use activation $\sin(\cdot)$ in these experiments. The scalar weights $\alpha_l^{(1)}$, $\beta_l^{(1)}$ and $\beta_l^{(2)}$ are set to be 1.

\subsubsection{One-dimensional Problem}
In the first example, we let $\Omega=[-1,1]$ be a 1-D interval and $v(t,x)=t+x+\frac 32$. The true solution is set as $$u(t,x)=e^t\sin x.$$

We set the numbers of sample points $\{x_n^{(1)}\}$ and $\{x_n^{(2)}\}$ as $N_1=1000$ and $N_2=400$. The depth-width combinations $(L,M)=(6,10)$, $(8,20)$ and $(10,50)$ are tested, and the number of iterations is set as $10^3$. The actual losses, mean squared losses and solution errors calculated from the best seeds are presented in Table \ref{Tab_Case3_bestresult}. It shows that when $L=6$, the solution errors obtained through LS and SAPM can achieve $O(10^{-3})$, but the solution error obtained by PM is only $O(10^{-1})$. For larger network size $(8,20)$, the solution error of LS remains $O(10^{-3})$, while the error of SAPM decreases to $O(10^{-4})$; PM fails with an error close to 1. Furthermore, for the network size $(10, 50)$, SAPM is still effective with acceptable accuracy $O(10^{-3})$; in comparison, LS and PM completely fail with errors close to 1.

In addition, according to Theorem \ref{thm03} and \ref{thm04}, the actual loss $\JS$ and the mean squared loss $\mathcal{J}$ of SAPM are consistent. More specifically, Theorem \ref{thm03} indicates that $\mathcal{J}$ is bounded above by $\widetilde{C}dL(L+1)\JS$. In this example, $\widetilde{C}=1$, so $\mathcal{J}$ should not exceed $dL(L+1)$ times $\JS$. It is verified that this relation holds in every iteration of the SAPM algorithm in our numerical experiments.

\begin{table}
\fontsize{7}{9.6}\selectfont
\centering
\begin{tabular}{clccc}
  \toprule
$(L,M)$ & Models & Actual loss & $\mathcal{J}$ & $\hEltwo$ \\\hline
\multirow{3}{*}{$(6,10)$} & LS & \multicolumn{2}{c}{1.22e-04} & 3.99e-03 \\
& PM & 6.64e-06 & 2.20e+00 & 4.52e-01 \\
& SAPM & 5.18e-05 & 8.73e-04 & 6.85e-03 \\\hline
\multirow{3}{*}{$(8,20)$} & LS & \multicolumn{2}{c}{1.75e-04} & 6.16e-03 \\
& PM & 1.12e-03 & 1.07e+01 & 9.53e-01 \\
& SAPM & 3.36e-07 & 4.79e-06 & 6.92e-04 \\\hline
\multirow{3}{*}{$(10,50)$} & LS & \multicolumn{2}{c}{2.24e+00} & 1.00e+00 \\
& PM & 2.62e-03 & 1.11e+01 & 9.65e-01 \\
& SAPM & 4.05e-07 & 3.35e-06 & 1.56e-03 \\
\bottomrule
\end{tabular}
\captionsetup{width=\textwidth}
\caption{\em The final actual losses, associated mean squared losses $\mathcal{J}$ and solution errors $\hEltwo$ in solving the one-dimensional transport equation. (Results are from the best seed.)}
\label{Tab_Case3_bestresult}
\end{table}  

\subsubsection{Three-dimensional Problem}
In the second example, we solve the problem \eqref{eq:Trans.Eq} with $v_i(t,x)=x_i+2$ and the true solution $$u(t,x)=\sum_{i=1}^d(t+x_i)\sin x_i,$$
where $d=3$ and $\Omega=[-1,1]^3$. Here, we choose $N_1=4000$ and $N_2=1600$ points to form the training dataset. We implement the algorithms for three depth-width combinations $(L,M)=(6,40)$, $(8,30)$ and $(8,40)$ with $10^3$ iterations. The results are listed in Table \ref{Tab_Case4_bestresult}. 

For all network sizes, SAPM achieves errors around $O(10^{-2})$, but LS and PM obtain errors as large as $O(10^{-1})$. For LS, as the network size increases, the final loss rises, implying the difficulty in optimizing wider and deeper networks without auxiliary variables. For PM, despite its small final losses around $O(10^{-3})$, the final solution error is large, which reflects the inconsistency between the loss function and solution error.   

In addition, the numerical results of this experiment still verify Theorem \ref{thm03} and \ref{thm04}. The actual loss $\JS$ and the mean squared loss $\mathcal{J}$ of SAPM always decrease in parallel, with the latter bounded above by $dL(L+1)$ times the former.

\begin{table}
\fontsize{7}{9.6}\selectfont
\centering
\begin{tabular}{clccc}
\toprule
$(L,M)$ & Models & Actual loss & $\mathcal{J}$ & $\hEltwo$ \\\hline
\multirow{3}{*}{$(6,40)$} & LS & \multicolumn{2}{c}{1.13e-01} & 1.91e-01 \\
& PM & 1.97e-03 & 1.53e+01 & 5.16e-01 \\
& SAPM & 1.51e-03 & 4.60e-02 & 3.25e-02 \\\hline
\multirow{3}{*}{$(8,30)$} & LS & \multicolumn{2}{c}{1.38e-01} & 2.52e-01 \\
& PM & 2.90e-03 & 2.72e+01 & 6.29e-01 \\
& SAPM & 2.51e-03 & 7.29e-02 & 3.72e-02 \\\hline
\multirow{3}{*}{$(8,40)$} & LS & \multicolumn{2}{c}{4.19e+00} & 6.25e-01 \\
& PM & 1.53e-03 & 2.55e+01 & 6.12e-01 \\
& SAPM & 1.56e-03 & 4.75e-02 & 3.29e-02 \\
\bottomrule
\end{tabular}
\captionsetup{width=\textwidth}
\caption{\em The final actual losses, associated mean squared losses $\mathcal{J}$ and solution errors $\hEltwo$ in solving the three-dimensional transport equation. (Results are from the best seed.)}
\label{Tab_Case4_bestresult}
\end{table} 

\section{Conclusion}\label{sec5}
This work considers effective models for least squares deep learning. Due to the high non-convexity of the mean squared loss with deep neural networks, common optimizers are usually less effective in finding good minima. Instead of directly optimizing the mean squared loss, one can resort to alternative models. One technique is introducing auxiliary variables to split the deep neural network layer by layer, and these variables are governed by penalty terms added to the loss function. This penalty model (PM) enhances the convexity of the loss function with variables but does not preserve the consistency between its loss function and the original mean squared loss; hence, it may have low accuracy in practice. 

To overcome the issue, we develop novel models with self-adaptive weighted auxiliary variables (SAPM) for learning problems with FNNs and first-order linear PDEs with PINNs. Theoretically, we prove that the loss function of SAPM is consistent with the original mean squared loss. Moreover, we design alternating direction algorithms to solve SAPM efficiently. In numerical experiments, we compare SAPM with the original least squares model (LS) and PM. Results show that the proposed SAPM has higher accuracy than LS and PM; in particular, for deeper networks, SAPM still performs correctly, but LS and PM completely fail. Numerical results also verify the proven consistency that the mean squared loss is always bound above by the SAPM loss function. 

Future work could consider the application of the self-adaptive auxiliary variables to other types of deep learning problems, such as PINN models from high-order PDEs, image processing by convolutional neural networks, and time-dependent problems by recurrent neural networks.
\appendix 
\section{Notations used in Section \ref{sec_implementation_SAPM_PINN}}\label{Variables}
The notations used in Section \ref{sec_implementation_SAPM_PINN} are defined as follows.
\begin{align*}
Z_l:=&\begin{bmatrix}Z_{l,1}[1]~\dots~Z_{l,1}[N_1]~\dots~Z_{l,d}[1]~\dots~Z_{l,d}[N_1]\end{bmatrix},  \quad 2\leq l\leq L;\\
\widetilde{A}_l^{(1)}:=&\begin{bmatrix}\sqrt{\gamma_{1,l}}A_l^{(1)}[1]~\dots~\sqrt{\gamma_{N_1,l}}A_l^{(1)}[N_1]\end{bmatrix}, \quad 1\leq l\leq L;\\
A_l^{(k)}:=&\begin{cases}\sigma(a_{l-1}^{(k)}),& 2\leq l\leq L,\\
X^{(k)},& l=1,\end{cases}\quad k=1,2;\\
\lambda_l:=&\frac{1}{N_1}\Big[\sum_{i=1}^d\sum_{m=1}^{l-1}\widetilde{\omega}_m\|D_{m,i}-d_{m,i}\|_\tF^2+\eta_l\sum_{m=1}^{l-1}\widetilde{\omega}_m\|(W_mA_m^{(1)}-P_m^{(1)})\Omega_l\|_\tF^2\\
&+\sum_{m=1}^{l-1}\widetilde{\omega}_m\|(W_mA_m^{(1)}-P_m^{(1)})\widetilde{\Omega}_m\|_\tF^2\Big]+\frac{1}{N_2}\sum_{m=1}^{l-1}\widetilde{\omega}_m\|W_m A_m^{(2)}-P_m^{(2)}\|_\tF^2,\\
&\qquad\qquad\qquad\qquad\qquad\qquad\qquad\qquad\qquad\qquad\qquad\qquad\qquad 3\leq l\leq L;\\
\lambda_2:=&\frac{1}{N_1}\Big[\sum_{i=1}^d\|D_{1,i}-d_{1,i}\|_\tF^2+\|(W_{1}A_{1}^{(1)}-P_{1}^{(1)})\Omega_1\|_\tF^2\Big]+\frac{1}{N_2}\|W_1A_1^{(2)}-P_1^{(2)}\|_\tF^2;\\
D_l:=&\begin{bmatrix}d_{l,1}[1]~\dots~ d_{l,1}[N_1] ~\dots~ d_{l,d}[1] ~\dots~ d_{l,d}[N_1]\end{bmatrix}, \quad 1\leq l\leq L;\\
\widetilde{P}_l^{(1)}:=&\begin{bmatrix}\sqrt{\gamma_{1,l}}P_l^{(1)}[1]~\dots~\sqrt{\gamma_{N_1,l}}P_l^{(1)}[N_1]\end{bmatrix}, \quad 1\leq l\leq L;\\
P_l^{(k)}:=&\begin{cases}Y^{(k)}-b_L\Bone^\top,& l=L,\\
a_l^{(k)}-b_l\Bone^\top,& 1\leq l\leq L-1,\end{cases}\quad k=1,2;\\
\theta_{l}:=&\Big[\sum_{k=l}^{L-1}\sum_{i=1}^d\|d_{l,i}[1]\|_\tF^2 ~\dots~ \sum_{k=l}^{L-1}\sum_{i=1}^d\|d_{l,i}[N_1]\|_\tF^2\Big],\quad 1\leq l\leq L-1;\\
\xi_{n,l}:=&\sum_{j=1}^l\Big(\prod_{k=j+1}^{l+1}\|W_k\|_\tF^2\Big)\|W_jA_j^{(1)}[n]-P_j^{(1)}[n]\|_\tF^2, \quad 1\leq l\leq L-1;\\
\delta_{n,l}:=&\begin{cases}\sum_{m=l}^{L-1}\|d_{m,i}[n]\|_\tF^2, & 1\leq l\leq L-1,\\ 0,& l = L,\end{cases}
\end{align*}
with
\begin{align*}
Z_{l,i}:=&\sigma^{'}(a_{l-1}^{(1)})*d_{l-1,i},\quad 2\leq l\leq L,~1\leq i\leq d;\\
\gamma_{n,l}:=&\begin{cases}\sum_{i=1}^{d}\sum_{m=l}^{L-1}\|d_{m,i}[n]\|_\tF^2, & 1\leq l\leq L-1,\\ 0,& l = L,\end{cases}\quad 1\leq n\leq N_1;\\
\widetilde{\omega}_m:=&\begin{cases}
    \prod_{k=m+1}^{l-1}\|W_k\|_\tF^2,& 1\leq m\leq l-2,\\
    1,& m=l-1,
\end{cases}\quad 3\leq l\leq L;\\
\widetilde{\Omega}_m:=&\diag\left(\sqrt{\sum_{i=1}^{d}\sum_{j=m}^{l-1}\|d_{j,i}[1]\|_\tF^2}~\dots~\sqrt{\sum_{i=1}^{d}\sum_{j=m}^{l-1}\|d_{j,i}[N_1]\|_\tF^2}\right),\quad 3\leq l\leq L;\\
D_{l,i}:=&\begin{cases}W_l(\sigma^{'}(a_{l-1}^{(1)})*d_{l-1,i}),& 2\leq l\leq L,\\W_1(:,i)\Bone^\top,& l=1,\end{cases}\quad 1\leq i\leq d;\\
\eta_l:=&\begin{cases} 0,& l = L,\\
1, & 1\leq l\leq L-1.\end{cases}
\end{align*}

\bibliography{sn-bibliography.bib}


\begin{thebibliography}{28}
\ifx \bisbn   \undefined \def \bisbn  #1{ISBN #1}\fi
\ifx \binits  \undefined \def \binits#1{#1}\fi
\ifx \bauthor  \undefined \def \bauthor#1{#1}\fi
\ifx \batitle  \undefined \def \batitle#1{#1}\fi
\ifx \bjtitle  \undefined \def \bjtitle#1{#1}\fi
\ifx \bvolume  \undefined \def \bvolume#1{\textbf{#1}}\fi
\ifx \byear  \undefined \def \byear#1{#1}\fi
\ifx \bissue  \undefined \def \bissue#1{#1}\fi
\ifx \bfpage  \undefined \def \bfpage#1{#1}\fi
\ifx \blpage  \undefined \def \blpage #1{#1}\fi
\ifx \burl  \undefined \def \burl#1{\textsf{#1}}\fi
\ifx \doiurl  \undefined \def \doiurl#1{\url{https://doi.org/#1}}\fi
\ifx \betal  \undefined \def \betal{\textit{et al.}}\fi
\ifx \binstitute  \undefined \def \binstitute#1{#1}\fi
\ifx \binstitutionaled  \undefined \def \binstitutionaled#1{#1}\fi
\ifx \bctitle  \undefined \def \bctitle#1{#1}\fi
\ifx \beditor  \undefined \def \beditor#1{#1}\fi
\ifx \bpublisher  \undefined \def \bpublisher#1{#1}\fi
\ifx \bbtitle  \undefined \def \bbtitle#1{#1}\fi
\ifx \bedition  \undefined \def \bedition#1{#1}\fi
\ifx \bseriesno  \undefined \def \bseriesno#1{#1}\fi
\ifx \blocation  \undefined \def \blocation#1{#1}\fi
\ifx \bsertitle  \undefined \def \bsertitle#1{#1}\fi
\ifx \bsnm \undefined \def \bsnm#1{#1}\fi
\ifx \bsuffix \undefined \def \bsuffix#1{#1}\fi
\ifx \bparticle \undefined \def \bparticle#1{#1}\fi
\ifx \barticle \undefined \def \barticle#1{#1}\fi
\bibcommenthead
\ifx \bconfdate \undefined \def \bconfdate #1{#1}\fi
\ifx \botherref \undefined \def \botherref #1{#1}\fi
\ifx \url \undefined \def \url#1{\textsf{#1}}\fi
\ifx \bchapter \undefined \def \bchapter#1{#1}\fi
\ifx \bbook \undefined \def \bbook#1{#1}\fi
\ifx \bcomment \undefined \def \bcomment#1{#1}\fi
\ifx \oauthor \undefined \def \oauthor#1{#1}\fi
\ifx \citeauthoryear \undefined \def \citeauthoryear#1{#1}\fi
\ifx \endbibitem  \undefined \def \endbibitem {}\fi
\ifx \bconflocation  \undefined \def \bconflocation#1{#1}\fi
\ifx \arxivurl  \undefined \def \arxivurl#1{\textsf{#1}}\fi
\csname PreBibitemsHook\endcsname

\bibitem[\protect\citeauthoryear{Du et~al.}{2019}]{Du2019}
\begin{bchapter}
\bauthor{\bsnm{Du}, \binits{S.S.}},
\bauthor{\bsnm{Zhai}, \binits{X.}},
\bauthor{\bsnm{Poczos}, \binits{B.}},
\bauthor{\bsnm{Singh}, \binits{A.}}:
\bctitle{Gradient descent provably optimizes over-parameterized neural networks}.
In: \bbtitle{International Conference on Learning Representations}
(\byear{2019})
\end{bchapter}
\endbibitem

\bibitem[\protect\citeauthoryear{Allen-Zhu et~al.}{2019}]{Allen-Zhu2019}
\begin{bchapter}
\bauthor{\bsnm{Allen-Zhu}, \binits{Z.}},
\bauthor{\bsnm{Li}, \binits{Y.}},
\bauthor{\bsnm{Song}, \binits{Z.}}:
\bctitle{A convergence theory for deep learning via over-parameterization}.
In: \bbtitle{Proceedings of the 36th International Conference on Machine Learning},
vol. \bseriesno{97},
pp. \bfpage{242}--\blpage{252}
(\byear{2019})
\end{bchapter}
\endbibitem

\bibitem[\protect\citeauthoryear{Zou and Gu}{2019}]{Zou2019}
\begin{bchapter}
\bauthor{\bsnm{Zou}, \binits{D.}},
\bauthor{\bsnm{Gu}, \binits{Q.}}:
\bctitle{An improved analysis of training over-parameterized deep neural networks}.
In: \bbtitle{Advances in Neural Information Processing Systems}
(\byear{2019})
\end{bchapter}
\endbibitem

\bibitem[\protect\citeauthoryear{Du et~al.}{2019}]{Du2019_2}
\begin{bchapter}
\bauthor{\bsnm{Du}, \binits{S.S.}},
\bauthor{\bsnm{Lee}, \binits{J.}},
\bauthor{\bsnm{Li}, \binits{H.}},
\bauthor{\bsnm{Wang}, \binits{L.}},
\bauthor{\bsnm{Zhai}, \binits{X.}}:
\bctitle{Gradient descent finds global minima of deep neural networks}.
In: \bbtitle{Proceedings of the 36th International Conference on Machine Learning},
vol. \bseriesno{97},
pp. \bfpage{1675}--\blpage{1685}
(\byear{2019})
\end{bchapter}
\endbibitem

\bibitem[\protect\citeauthoryear{Allen-Zhu and Y.~Li}{2019}]{Allen-Zhu2019_2}
\begin{bchapter}
\bauthor{\bsnm{Allen-Zhu}, \binits{Z.}},
\bauthor{\bsnm{Y.~Li}, \binits{Y.L.}}:
\bctitle{Learning and generalization in overparameterized neural networks, going beyond two layers}.
In: \bbtitle{Advances in Neural Information Processing Systems}
(\byear{2019})
\end{bchapter}
\endbibitem

\bibitem[\protect\citeauthoryear{E et~al.}{2019}]{E2019}
\begin{barticle}
\bauthor{\bsnm{E}, \binits{W.}},
\bauthor{\bsnm{Ma}, \binits{C.}},
\bauthor{\bsnm{Wu}, \binits{L.}}:
\batitle{A comparative analysis of optimization and generalization properties of two-layer neural network and random feature models under gradient descent dynamics}.
\bjtitle{Sci. China Math}
\bvolume{63}(\bissue{7}),
\bfpage{1235}--\blpage{1258}
(\byear{2019})
\end{barticle}
\endbibitem

\bibitem[\protect\citeauthoryear{Zhou et~al.}{2021}]{Zhou2021}
\begin{bchapter}
\bauthor{\bsnm{Zhou}, \binits{M.}},
\bauthor{\bsnm{Ge}, \binits{R.}},
\bauthor{\bsnm{Jin}, \binits{C.}}:
\bctitle{A local convergence theory for mildly over-parameterized two-layer neural network}.
In: \bbtitle{Proceedings of Thirty Fourth Conference on Learning Theory},
vol. \bseriesno{139},
pp. \bfpage{4577}--\blpage{4632}
(\byear{2021})
\end{bchapter}
\endbibitem

\bibitem[\protect\citeauthoryear{Oymak and Soltanolkotabi}{2020}]{Oymak2020}
\begin{barticle}
\bauthor{\bsnm{Oymak}, \binits{S.}},
\bauthor{\bsnm{Soltanolkotabi}, \binits{M.}}:
\batitle{Toward moderate overparameterization: Global convergence guarantees for training shallow neural networks}.
\bjtitle{IEEE J. Sel. Areas Inf. Theory}
\bvolume{1}(\bissue{1}),
\bfpage{84}--\blpage{105}
(\byear{2020})
\end{barticle}
\endbibitem

\bibitem[\protect\citeauthoryear{Raissi et~al.}{2019}]{Raissi2019}
\begin{barticle}
\bauthor{\bsnm{Raissi}, \binits{M.}},
\bauthor{\bsnm{Perdikaris}, \binits{P.}},
\bauthor{\bsnm{Karniadakis}, \binits{G.E.}}:
\batitle{Physics-informed neural networks: {A} deep learning framework for solving forward and inverse problems involving nonlinear partial differential equations}.
\bjtitle{J. Comput. Phys.}
\bvolume{378},
\bfpage{686}--\blpage{707}
(\byear{2019})
\end{barticle}
\endbibitem

\bibitem[\protect\citeauthoryear{Rao et~al.}{2020}]{Rao2020}
\begin{barticle}
\bauthor{\bsnm{Rao}, \binits{C.}},
\bauthor{\bsnm{Sun}, \binits{H.}},
\bauthor{\bsnm{Liu}, \binits{Y.}}:
\batitle{Physics-informed deep learning for incompressible laminar flows}.
\bjtitle{Theor. Appl. Mech. Lett.}
\bvolume{10}(\bissue{3}),
\bfpage{207}--\blpage{212}
(\byear{2020})
\end{barticle}
\endbibitem

\bibitem[\protect\citeauthoryear{Cai et~al.}{2021}]{Cai2021}
\begin{barticle}
\bauthor{\bsnm{Cai}, \binits{S.}},
\bauthor{\bsnm{Wang}, \binits{Z.}},
\bauthor{\bsnm{Wang}, \binits{S.}},
\bauthor{\bsnm{Perdikaris}, \binits{P.}},
\bauthor{\bsnm{Karniadakis}, \binits{G.E.}}:
\batitle{Physics-informed neural networks for heat transfer problems}.
\bjtitle{J. Heat Transf.}
\bvolume{143}(\bissue{6}),
\bfpage{060801}
(\byear{2021})
\end{barticle}
\endbibitem

\bibitem[\protect\citeauthoryear{Pang et~al.}{2019}]{Pang2019}
\begin{barticle}
\bauthor{\bsnm{Pang}, \binits{G.}},
\bauthor{\bsnm{Lu}, \binits{L.}},
\bauthor{\bsnm{Karniadakis}, \binits{G.E.}}:
\batitle{{fPINNs}: {Fractional} physics-informed neural networks}.
\bjtitle{SIAM J. Sci. Comput.}
\bvolume{41}(\bissue{4}),
\bfpage{2603}--\blpage{2626}
(\byear{2019})
\end{barticle}
\endbibitem

\bibitem[\protect\citeauthoryear{Jagtap and Karniadakis}{2021}]{Jagtap2021}
\begin{bchapter}
\bauthor{\bsnm{Jagtap}, \binits{A.D.}},
\bauthor{\bsnm{Karniadakis}, \binits{G.E.}}:
\bctitle{Extended physics-informed neural networks {(XPINNs)}: {A} generalized space-time domain decomposition based deep learning framework for nonlinear partial differential equations}.
In: \bbtitle{AAAI Spring Symposium: MLPS},
vol. \bseriesno{10}
(\byear{2021})
\end{bchapter}
\endbibitem

\bibitem[\protect\citeauthoryear{Chiu et~al.}{2022}]{Chiu2022}
\begin{barticle}
\bauthor{\bsnm{Chiu}, \binits{P.-H.}},
\bauthor{\bsnm{Wong}, \binits{J.C.}},
\bauthor{\bsnm{Ooi.}, \binits{C.}},
\bauthor{\bsnm{Dao}, \binits{M.H.}},
\bauthor{\bsnm{Ong}, \binits{Y.-S.}}:
\batitle{{CAN-PINN: A} fast physics-informed neural network based on coupled-automatic–numerical differentiation method}.
\bjtitle{Comput. Methods Appl. Mech. Eng.}
\bvolume{395},
\bfpage{114909}
(\byear{2022})
\end{barticle}
\endbibitem

\bibitem[\protect\citeauthoryear{Gao et~al.}{2023}]{Gao2023}
\begin{bchapter}
\bauthor{\bsnm{Gao}, \binits{Y.}},
\bauthor{\bsnm{Gu}, \binits{Y.}},
\bauthor{\bsnm{Ng}, \binits{M.}}:
\bctitle{Gradient descent finds the global optima of two-layer physics-informed neural networks}.
In: \bbtitle{Proceedings of the 40th International Conference on Machine Learning},
vol. \bseriesno{202},
pp. \bfpage{10676}--\blpage{10707}
(\byear{2023})
\end{bchapter}
\endbibitem

\bibitem[\protect\citeauthoryear{Luo and Yang}{2020}]{Luo2020}
\begin{botherref}
\oauthor{\bsnm{Luo}, \binits{T.}},
\oauthor{\bsnm{Yang}, \binits{H.}}:
Two-layer neural networks for partial differential equations: Optimization and generalization theory.
https://arxiv.org/abs/2006.15733
(2020)
\end{botherref}
\endbibitem

\bibitem[\protect\citeauthoryear{R{\"o}gnvaldsson}{1994}]{Rognvaldsson1994}
\begin{barticle}
\bauthor{\bsnm{R{\"o}gnvaldsson}, \binits{T.}}:
\batitle{On langevin updating in multilayer perceptrons}.
\bjtitle{Neural Comput.}
\bvolume{6}(\bissue{5}),
\bfpage{916}--\blpage{926}
(\byear{1994})
\end{barticle}
\endbibitem

\bibitem[\protect\citeauthoryear{Erhan et~al.}{2009}]{Erhan2009}
\begin{bchapter}
\bauthor{\bsnm{Erhan}, \binits{D.}},
\bauthor{\bsnm{Manzagol}, \binits{P.A.}},
\bauthor{\bsnm{Bengio}, \binits{Y.}},
\bauthor{\bsnm{Bengio}, \binits{S.}},
\bauthor{\bsnm{Vincent}, \binits{P.}}:
\bctitle{The difficulty of training deep architectures and the effect of unsupervised pre-training}.
In: \bbtitle{Proceedings of the 12th Int. Workshop on Artificial Intelligence and Statistics},
vol. \bseriesno{5},
pp. \bfpage{153}--\blpage{160}
(\byear{2009})
\end{bchapter}
\endbibitem

\bibitem[\protect\citeauthoryear{Carreira-Perpinan and Wang}{2014}]{Carreira-Perpinan2014}
\begin{bchapter}
\bauthor{\bsnm{Carreira-Perpinan}, \binits{M.}},
\bauthor{\bsnm{Wang}, \binits{W.}}:
\bctitle{Distributed optimization of deeply nested systems}.
In: \bbtitle{Proceedings of the Seventeenth International Conference on Artificial Intelligence and Statistics},
vol. \bseriesno{33},
pp. \bfpage{10}--\blpage{19}
(\byear{2014})
\end{bchapter}
\endbibitem

\bibitem[\protect\citeauthoryear{Taylor et~al.}{2016}]{Taylor2016}
\begin{bchapter}
\bauthor{\bsnm{Taylor}, \binits{G.}},
\bauthor{\bsnm{Burmeister}, \binits{R.}},
\bauthor{\bsnm{Xu}, \binits{Z.}},
\bauthor{\bsnm{Singh}, \binits{B.}},
\bauthor{\bsnm{Patel}, \binits{A.}},
\bauthor{\bsnm{Goldstein}, \binits{T.}}:
\bctitle{Training neural networks without gradients: A scalable admm approach}.
In: \bbtitle{Proceedings of The 33rd International Conference on Machine Learning},
vol. \bseriesno{48},
pp. \bfpage{2722}--\blpage{2731}
(\byear{2016})
\end{bchapter}
\endbibitem

\bibitem[\protect\citeauthoryear{Wang et~al.}{2019}]{Wang2019}
\begin{bchapter}
\bauthor{\bsnm{Wang}, \binits{J.}},
\bauthor{\bsnm{Yu}, \binits{F.}},
\bauthor{\bsnm{Chen}, \binits{X.}},
\bauthor{\bsnm{Zhao}, \binits{L.}}:
\bctitle{Admm for efficient deep learning with global convergence}.
In: \bbtitle{Proceedings of the 25th ACM SIGKDD International Conference on Knowledge Discovery \& Data Mining},
pp. \bfpage{111}--\blpage{119}
(\byear{2019})
\end{bchapter}
\endbibitem

\bibitem[\protect\citeauthoryear{Gao et~al.}{2019}]{Gao2019}
\begin{barticle}
\bauthor{\bsnm{Gao}, \binits{Y.}},
\bauthor{\bsnm{Zhao}, \binits{L.}},
\bauthor{\bsnm{Wu}, \binits{L.}},
\bauthor{\bsnm{Ye}, \binits{Y.}},
\bauthor{\bsnm{Xiong}, \binits{H.}},
\bauthor{\bsnm{Yang}, \binits{C.}}:
\batitle{Incomplete label multi-task deep learning for spatio-temporal event subtype forecasting}.
\bjtitle{Proceedings of the AAAI conference on artificial intelligence}
\bvolume{33}(\bissue{01}),
\bfpage{3638}--\blpage{3646}
(\byear{2019})
\end{barticle}
\endbibitem

\bibitem[\protect\citeauthoryear{Yang et~al.}{2018}]{Yang2018}
\begin{barticle}
\bauthor{\bsnm{Yang}, \binits{Y.}},
\bauthor{\bsnm{Sun}, \binits{J.}},
\bauthor{\bsnm{Li}, \binits{H.}},
\bauthor{\bsnm{Xu}, \binits{Z.}}:
\batitle{{ADMM-CSNet}: A deep learning approach for image compressive sensing}.
\bjtitle{IEEE Trans. Pattern Anal. Mach. Intell.}
\bvolume{42}(\bissue{3}),
\bfpage{521}--\blpage{538}
(\byear{2018})
\end{barticle}
\endbibitem

\bibitem[\protect\citeauthoryear{Sun et~al.}{2016}]{Sun2016}
\begin{bchapter}
\bauthor{\bsnm{Sun}, \binits{J.}},
\bauthor{\bsnm{Li}, \binits{H.}},
\bauthor{\bsnm{Xu}, \binits{Z.}}:
\bctitle{Deep {ADMM-Net} for compressive sensing {MRI}}.
In: \bbtitle{Advances in Neural Information Processing Systems}
(\byear{2016})
\end{bchapter}
\endbibitem

\bibitem[\protect\citeauthoryear{Song et~al.}{2023}]{Song2023}
\begin{botherref}
\oauthor{\bsnm{Song}, \binits{Y.}},
\oauthor{\bsnm{Yuan}, \binits{X.}},
\oauthor{\bsnm{Yue}, \binits{H.}}:
The {ADMM-PINNs} Algorithmic Framework for Nonsmooth {PDE}-Constrained Optimization: {A} Deep Learning Approach.
https://arxiv.org/abs/2302.08309
(2023)
\end{botherref}
\endbibitem

\bibitem[\protect\citeauthoryear{Neyshabur et~al.}{2017}]{Neyshabur2017}
\begin{botherref}
\oauthor{\bsnm{Neyshabur}, \binits{B.}},
\oauthor{\bsnm{Tomioka}, \binits{R.}},
\oauthor{\bsnm{Salakhutdinov}, \binits{R.}},
\oauthor{\bsnm{Srebro}, \binits{N.}}:
Geometry of optimization and implicit regularization in deep learning.
http://arxiv.org/abs/1705.03071
(2017)
\end{botherref}
\endbibitem

\bibitem[\protect\citeauthoryear{Lei et~al.}{2018}]{Lei2018}
\begin{botherref}
\oauthor{\bsnm{Lei}, \binits{D.}},
\oauthor{\bsnm{Sun}, \binits{Z.}},
\oauthor{\bsnm{Xiao}, \binits{Y.}},
\oauthor{\bsnm{Wang}, \binits{W.Y.}}:
Implicit regularization of stochastic gradient descent in natural language processing: {Observations} and implications.
http://arxiv.org/abs/1811.00659
(2018)
\end{botherref}
\endbibitem

\bibitem[\protect\citeauthoryear{Cao et~al.}{2021}]{Cao2021}
\begin{bchapter}
\bauthor{\bsnm{Cao}, \binits{Y.}},
\bauthor{\bsnm{Fang}, \binits{Z.}},
\bauthor{\bsnm{Wu}, \binits{Y.}},
\bauthor{\bsnm{Zhou}, \binits{D.-X.}},
\bauthor{\bsnm{Gu}, \binits{Q.}}:
\bctitle{Towards understanding the spectral bias of deep learning}.
In: \bbtitle{Proceedings of the Thirtieth International Joint Conference on Artificial Intelligence},
pp. \bfpage{2205}--\blpage{2211}
(\byear{2021})
\end{bchapter}
\endbibitem

\end{thebibliography}

\end{document}